\pgfplotsset{compat=1.14}
\newtheorem{theorem}{Theorem}[section]
\newtheorem{corollary}{Corollary}[theorem]
\newtheorem*{remark}{Remark}
\newcommand{\blockmatrix}[9]{
  \draw[draw=#4,fill=#5] (0,0) rectangle( #1,#2);
  \ifthenelse{\equal{#6}{true}}
  {
    \draw[draw=#7,fill=#8] (0,#2) -- (#9,#2) -- ( #1,#9) -- ( #1,0) -- ( #1 - #9,0) -- (0,#2 -#9) -- cycle;
  }
  {}
  \draw ( #1/2, #2/2) node { #3};
  \draw[pattern=north west lines] (0,0.5) rectangle (0.5,1.5);
}
\newcommand{\blockmatrixXaTrain}[9]{
  \draw[draw=#4,fill=#5] (0,0) rectangle( #1,#2);
  \ifthenelse{\equal{#6}{true}}
  {
    \draw[draw=#7,fill=#8] (0,#2) -- (#9,#2) -- ( #1,#9) -- ( #1,0) -- ( #1 - #9,0) -- (0,#2 -#9) -- cycle;
  }
  {}
  \draw ( #1/2, #2/2) node { #3};
  \draw[pattern=north west lines] (0,0) rectangle (#1,0.5);
  \draw[pattern=north west lines] (0,1.5) rectangle (#1,1.8);
}
\newcommand{\blockmatrixXaTest}[9]{
  \draw[draw=#4,fill=#5] (0,0) rectangle( #1,#2);
  \ifthenelse{\equal{#6}{true}}
  {
    \draw[draw=#7,fill=#8] (0,#2) -- (#9,#2) -- ( #1,#9) -- ( #1,0) -- ( #1 - #9,0) -- (0,#2 -#9) -- cycle;
  }
  {}
  \draw ( #1/2, #2/2) node { #3};
  \draw[pattern=north west lines] (0,0.1) rectangle (#1,0.3);
}
\newcommand{\blockmatrixXbTrain}[9]{
  \draw[draw=#4,fill=#5] (0,0) rectangle( #1,#2);
  \ifthenelse{\equal{#6}{true}}
  {
    \draw[draw=#7,fill=#8] (0,#2) -- (#9,#2) -- ( #1,#9) -- ( #1,0) -- ( #1 - #9,0) -- (0,#2 -#9) -- cycle;
  }
  {}
  \draw ( #1/2, #2/2) node { #3};
  \draw[pattern=north west lines] (0,0.5) rectangle (#1,1.2);
}
\newcommand{\blockmatrixXbTest}[9]{
  \draw[draw=#4,fill=#5] (0,0) rectangle( #1,#2);
  \ifthenelse{\equal{#6}{true}}
  {
    \draw[draw=#7,fill=#8] (0,#2) -- (#9,#2) -- ( #1,#9) -- ( #1,0) -- ( #1 - #9,0) -- (0,#2 -#9) -- cycle;
  }
  {}
  \draw ( #1/2, #2/2) node { #3};
  \draw[pattern=north west lines] (0,0.2) rectangle (#1,0.3);
   \draw[pattern=north west lines] (0,0.8) rectangle (#1,1);
}
\newcommand{\blockmatrixY}[9]{
  \draw[draw=#4,fill=#5] (0,0) rectangle( #1,#2);
  \ifthenelse{\equal{#6}{true}}
  {
    \draw[draw=#7,fill=#8] (0,#2) -- (#9,#2) -- ( #1,#9) -- ( #1,0) -- ( #1 - #9,0) -- (0,#2 -#9) -- cycle;
  }
  {}
  \draw ( #1/2, #2/2) node { #3};
}
\newcommand{\mblockmatrix}[4][none]{
  \begin{tikzpicture} 
  \ifthenelse{\equal{#1}{none}}
  {
    \blockmatrix{#2}{#3}{#4}{none}{none}{false}{none}{none}{0.0}
  }
  {
    \definecolor{fillcolor}{rgb}{#1}
    \blockmatrix{#2}{#3}{#4}{none}{fillcolor}{false}{none}{none}{0.0}
  }
  \end{tikzpicture}
}
\newcommand{\fblockmatrix}[4][none]{
  \begin{tikzpicture} 
  \ifthenelse{\equal{#1}{none}}
  {
    \blockmatrix{#2}{#3}{#4}{black}{none}{false}{none}{none}{0.0}
  }
  {
    \definecolor{fillcolor}{rgb}{#1}
    \blockmatrix{#2}{#3}{#4}{black}{fillcolor}{false}{none}{none}{0.0}
  }
  \end{tikzpicture}
}
\newcommand{\fblockmatrixXaTrain}[4][none]{
  \begin{tikzpicture} 
  \ifthenelse{\equal{#1}{none}}
  {
    \blockmatrixXaTrain{#2}{#3}{#4}{black}{none}{false}{none}{none}{0.0}
  }
  {
    \definecolor{fillcolor}{rgb}{#1}
    \blockmatrixXaTrain{#2}{#3}{#4}{black}{fillcolor}{false}{none}{none}{0.0}
  }
  \end{tikzpicture}
}
\newcommand{\fblockmatrixXaTest}[4][none]{
  \begin{tikzpicture} 
  \ifthenelse{\equal{#1}{none}}
  {
    \blockmatrixXaTest{#2}{#3}{#4}{black}{none}{false}{none}{none}{0.0}
  }
  {
    \definecolor{fillcolor}{rgb}{#1}
    \blockmatrixXaTest{#2}{#3}{#4}{black}{fillcolor}{false}{none}{none}{0.0}
  }
  \end{tikzpicture}
}
\newcommand{\fblockmatrixXbTrain}[4][none]{
  \begin{tikzpicture} 
  \ifthenelse{\equal{#1}{none}}
  {
    \blockmatrixXbTrain{#2}{#3}{#4}{black}{none}{false}{none}{none}{0.0}
  }
  {
    \definecolor{fillcolor}{rgb}{#1}
    \blockmatrixXbTrain{#2}{#3}{#4}{black}{fillcolor}{false}{none}{none}{0.0}
  }
  \end{tikzpicture}
}
\newcommand{\fblockmatrixXbTest}[4][none]{
  \begin{tikzpicture} 
  \ifthenelse{\equal{#1}{none}}
  {
    \blockmatrixXbTest{#2}{#3}{#4}{black}{none}{false}{none}{none}{0.0}
  }
  {
    \definecolor{fillcolor}{rgb}{#1}
    \blockmatrixXbTest{#2}{#3}{#4}{black}{fillcolor}{false}{none}{none}{0.0}
  }
  \end{tikzpicture}
}
\newcommand{\fblockmatrixY}[4][none]{
  \begin{tikzpicture} 
  \ifthenelse{\equal{#1}{none}}
  {
    \blockmatrixY{#2}{#3}{#4}{black}{none}{false}{none}{none}{0.0}
  }
  {
    \definecolor{fillcolor}{rgb}{#1}
    \blockmatrixY{#2}{#3}{#4}{black}{fillcolor}{false}{none}{none}{0.0}
  }
  \end{tikzpicture}
}
\newcommand{\valignbox}[1]{
  \vtop{\null\hbox{#1}}
}
\newenvironment{blockmatrixtabular}
{
  \begin{tabular}{
  @{}l@{}l@{}l@{}l@{}l@{}l@{}l@{}l@{}l@{}l@{}l@{}l@{}l@{}l@{}l@{}l@{}l@{}l@{}l
  @{}l@{}l@{}l@{}l@{}l@{}l@{}l@{}l@{}l@{}l@{}l@{}l@{}l@{}l@{}l@{}l@{}l@{}l@{}l
  @{}l@{}l@{}l@{}l@{}l@{}l@{}l@{}l@{}l@{}l@{}l@{}l@{}l@{}l@{}l@{}l@{}l@{}l@{}l
  @{}
  }
}
{
  \end{tabular}
}
\algnewcommand{\Inputs}[1]{%
  \State \textbf{Inputs:}
  \Statex \hspace*{\algorithmicindent}\parbox[t]{.8\linewidth}{\raggedright #1}
}
\algnewcommand{\Initialize}[1]{%
  \State \textbf{Initialize:}
  \Statex \hspace*{\algorithmicindent}\parbox[t]{.8\linewidth}{\raggedright #1}
}
\newcolumntype{L}[1]{>{\raggedright\let\newline\\\arraybackslash\hspace{0pt}}m{#1}}
\newcolumntype{C}[1]{>{\centering\let\newline\\\arraybackslash\hspace{0pt}}m{#1}}
\newcolumntype{R}[1]{>{\raggedleft\let\newline\\\arraybackslash\hspace{0pt}}m{#1}}
\newmdenv[
  topline=false,
  bottomline=false,
  skipabove=\topsep,
  skipbelow=\topsep,
  leftmargin=-10pt,
  rightmargin=-10pt,
  innertopmargin=0pt,
  innerbottommargin=0pt
]{siderules}
\title{}
\definecolor{darkgreen}{rgb}{0.05, 0.5, 0.06}
\definecolor{darkblue}{rgb}{0.0, 0.0, 0.55}
\definecolor{LightCyan}{rgb}{0.88,1,1}
\definecolor{paired1}{HTML}{A6CEE3} 
\definecolor{paired2}{HTML}{1F78B4}
\definecolor{paired3}{HTML}{B2DF8A}
\definecolor{paired4}{HTML}{33A02C}
\definecolor{paired5}{HTML}{FB9A99}
\definecolor{paired6}{HTML}{E31A1C}
\definecolor{paired7}{HTML}{FDBF6F}
\definecolor{paired8}{HTML}{FF7F00}
\definecolor{paired9}{HTML}{CAB2D6}
\definecolor{paired10}{HTML}{6A3D9A}
\begin{document}








\title{Supervised Learning for Multi-Block Incomplete Data}

\author{ Hadrien Lorenzo \\ hadrien.lorenzo@u-bordeaux.fr\\
SISTM INRIA BSO, Inserm-U1219 BPH\\146 rue Léo Saignat
33076 Bordeaux cedex
\and
 J\'{e}r\^{o}me Saracco \\ jerome.saracco@inria.fr
\\ CQFD INRIA Bordeaux Sud-Ouest-France, CNRS (UMR5251)\\ 200 Avenue de la Vieille Tour, 33405 Talence
\and
 Rodolphe Thi\'{e}baut \\ rodolphe.thiebaut@u-bordeaux.fr
\\ SISTM INRIA BSO, Inserm-U1219 BPH, Vaccine Research Institute\\8 rue du Général Sarrail
94010 Créteil cedex }


\maketitle

\begin{abstract}%
In the supervised high dimensional settings with a large number of variables and a low number of individuals, one objective is to select the relevant variables and thus to reduce the dimension. That subspace selection is often managed with supervised tools. However, some data can be missing, compromising the validity of the sub-space selection. We propose a Partial Least Square (PLS) based method, called Multi-block Data-Driven sparse PLS ``mdd-sPLS'', allowing jointly variable selection and subspace estimation while training and testing missing data imputation through a new algorithm called Koh-Lanta. This method was challenged through simulations against existing methods such as mean imputation, nipals, softImpute and imputeMFA. In the context of supervised analysis of high dimensional data, the proposed method shows the lowest prediction error of the response variables. So far this is the only method combining data imputation and response variable prediction. The superiority of the supervised multi-block mdd-sPLS method increases with the intra-block and inter-block correlations. The application to a real data-set from a rVSV-ZEBOV Ebola vaccine trial revealed interesting and biologically relevant results. The method is implemented in a \textbf{R}-package available on the \textbf{CRAN} and a \textbf{Python}-package available on \textbf{pypi}. 
\end{abstract}

\section{Introduction}
\label{S:1}

Missing data may also happen in high-dimensional settings where the number of variables is very large but not fully completed. Among the many different methodological solutions, of which a description can be found in~\citet{JMLR:v18:17-073}, this paper focuses on singular value decomposition (\textbf{SVD})-based methods in the context of simple imputation. The \textbf{SVD}-based imputation methods assume that the eigen vectors are not too much influenced by the missing values and also can be denoised by alternating \textbf{SVD}-decompositions and imputations through linear combinations of the current eigen vectors until convergence is reached~\citep[][]{troyanskaya2001missing,hastie2015matrix}. The \textbf{mean} imputation is used as a reference method and leads to good results in many situations.~\citet{hastie2015matrix} propose a particularly fast ridge-regression and \textbf{SVD} soft-thresholding alternated method, designed for the mono-block context.~\citet{husson2013handling} develop a multi-block method called \textbf{imputeMFA} that uses multiple axis decomposition with self-tunable ridge penalization. The \textbf{nipals} algorithm, for \textbf{N}onlinear \textbf{I}terative \textbf{Pa}rtial \textbf{L}east \textbf{S}quares firstly detailed by~\citet{wold1966estimation}, allows dealing with missing observations~\citep[see for example][]{nelson1996missing} in the context of unsupervised problems with principal component analysis (\textbf{PCA}) but also in supervised problems with the Partial Least Squares (\textbf{PLS}). Other methods not considered in this paper are either not based on \textbf{SVD} such as \textbf{missForest} developed by~\citet{stekhoven2011missforest}  or are multiple imputation methods such as \texttt{mice} detailed by~\citet{buuren2010mice}.

All the methods existing so far and presented above are two-steps approaches where the data are first imputed and then the imputed data set is analyzed using any unrelated statistical tool. Our objective was to develop a method able to impute missing covariates and predict the response in the same time, for a multi-block supervised data set with potential missing data in the covariate part.

In the following, matrices are written with bold capital letters and vectors with bold lower-case letters. For any matrix ${\bf M}$, $\mathbf{m}^{(i)}$ denotes its $i^{th}$-column.
Let $\mathbf{X}\in\mathbb{R}^{n\times p}$ and $\mathbf{Y}\in\mathbb{R}^{n\times q}$, be respectively the covariate matrix and the  response matrix, describing $n$ individuals through $p$ (resp. $q$) variables. Unless otherwise stated, data matrices $\mathbf{X}$ and $\mathbf{Y}$ are supposed to be standardized (zero-mean and unit-variance). \\
 The \textbf{PLS} problem maximizes the covariance between both of the projections of $\mathbf{X}$ and $\mathbf{Y}$ on their proper weights denoted by $\mathbf{u}\in \mathbb{R}^{p} $ and $\mathbf{v}\in \mathbb{R}^{q}$ respectively.  The underlying optimization problem can be written as
\begin{equation}
\begin{aligned}
& \underset{(\mathbf{u},\mathbf{v})}{\max}
& & \mathbf{u}^T\mathbf{X}^T\mathbf{Y}\mathbf{v} \\
& \text{s.t.}
& & \mathbf{u}^T\mathbf{u}=\mathbf{v}^T\mathbf{v}=1,
\end{aligned}
\label{equ:equPLS0_intro}
\end{equation}
for the current axis decomposition. The \textbf{nipals} permits to solve that problem. If further axes are needed, deflations are successively performed to remove the information carried by previous axes before solving the problem~\eqref{equ:equPLS0_intro} on the corresponding residual matrices. It has been shown that~\eqref{equ:equPLS0_intro} is equivalent to find eigen-vector linked to the largest eigen-value of $\mathbf{X}^T\mathbf{Y}\mathbf{Y}^T\mathbf{X}$ such as 
reformulated by \citet{hoskuldsson1988pls}.

To deal with the variable selection problem in {\bf PLS}, two existing methods are presented hereafter. They are based on the $\mathcal{L}$asso formulation which shrinks the $\mathcal{L}_1$-norm of the weights~\citep[see][]{tibshirani1996regression}. Based on the SCoTLASS solution to {\bf sparse PCA}, proposed by~\citet{jolliffe2003modified},~\citet{le2008sparse} considered a $\mathcal{L}_1$-norm penalization of the $\bf X$ and $\bf Y$ weights, introducing two $\mathcal{L}$agrangian coefficients which are fixed by the user,
 denoted as $\mathcal{L}$asso parameters. The associated approach has been implemented in the \textbf{R}-package \textbf{mixOmics}, see~\citet{le2009integromics}, and is denoted as \textbf{classic-sPLS} in the following. As pointed out by~\citet{zou2006sparse} and recalled by~\citet{chun2010sparse}, problem from~\citet{le2008sparse} is not convex and solutions are not in practice sufficiently sparse. \citet{chun2010sparse} proposed an alternative formulation to mitigate those drawbacks.
Introducing a parameter $\kappa$, they can reduce the concave part of the original problem and so reduces its impact on the global optimization problem. The authors performed simulations showing that a low $\kappa$ indeed provides ``a numerically easier optimization problem'' but no general result was given.
The main drawback is the computational cost due to the number of parameters. Furthermore, their problem allows to select only on the $\bf X$ data set while ~\citet{le2008sparse} allow to select variables on both data sets $\bf X$ and $\bf Y$. This is clearly a limit if the number $q$ of variables in $\mathbf{Y}$ is large and so tends to draw uncorrelated subspaces.

Other variable selection methods have been recently studied, such as the \textbf{SVD} decomposition of thresholded variance matrices, developed to tackle the \textbf{sparse PCA} question. \citet{d2005direct} have developed an elegant convex relaxed optimization problem and detailed an algorithm to solving it. Subsequently,~\citet{amini2008high} have compared that \textbf{SDP} solution to the Diagonal-Thresholding (\textbf{DT}) method, developed by~\citet{johnstone2004sparse} and~\citet{johnstone2009consistency}. This method showed comparable results with higher computational efficiency. Different types of threshold operators are considered and studied for example by~\citet{rothman2009generalized} and~\citet{johnstone2009consistency} and more recently by~\citet{cai2011adaptive}.~\citet{JMLR:v17:15-160} detailed an algorithm in which the \textbf{SVD} is performed on the soft-thresholded variance-covariance matrix based on results of~\citet{krauthgamer2015semidefinite} for which element-wise hard-thresholding was considered.  Strong theoretical results about selectivity and consistency exist for those approaches and~\citet{JMLR:v17:15-160} have proved the consistency of the soft-thresholding covariance matrix as an estimator of the covariance matrix in the high-dimensional context, $n<<p$, and using a spiked model hypothesis on the covariable \textbf{X} for which the authors seeks a sparse estimation of the variance-covariance matrix. In the present work, the matrix $\mathbf{Y^TX}$, which is at the core of the \textbf{PLS} problem through its \textbf{SVD}, has been modified under soft-thresholding manipulations which is justified by some of the works cited above and detailed below.

In regards of the context of multi-block high-dimensional data, several supervised approaches inspired by the PLS have been proposed.
The covariate part is defined through $T$ different matrices $\mathbf{X}_1,...,\mathbf{X}_T$ describing the same $n$ individuals. The adaptation of the {\bf PLS} method to the multi-block structure has been initially proposed through the ``SW-Harald-HW multi-block algorithm'' by~\citet{wold1984three}. A few years later ~\citet{wangen1989multiblock} detailed this approach today known as \textbf{MBPLS} (for Multi-Block Partial Least Square) through the optimization problem
\begin{equation*}
\begin{aligned}
& \underset{(\mathbf{u},\boldsymbol{\beta},\mathbf{v})}{\max}
& &\sum_{t=1}^T\beta_t\mathbf{v}^T\mathbf{Y}^T\mathbf{X}_t\mathbf{u}_t\\
& \text{s.t.}
& & \mathbf{u}_t^T\mathbf{u}_t=\mathbf{v}^T\mathbf{v}=\boldsymbol{\beta}^T\boldsymbol{\beta}=1,
\end{aligned}
\end{equation*}
 where the ${\beta}_t's$ gathering the information from the $T$ blocks $\mathbf{X}_t$ via their weight $\mathbf{u}_t$, and they make it possible to build the super-component  $\sum_{t=1}^T\beta_t\mathbf{X}_t\mathbf{u}_t$.
\textbf{MBPLS} is a \textbf{nipals} flavored method, using deflation procedures to obtain further axes. Initially the deflation of each block was made on its proper component, but~\citet{westerhuis1997multivariate} have shown the interest, in terms of prediction, to deflate each block on the super-component. Many authors have decided to challenge that question of deflation~\citep[see for example][]{westerhuis2001deflation}. Supposing that each block has been divided by its square root number of variables,~\citet{qin2001unifying} have demonstrated  the similarities of the \textbf{MBPLS} problem with a classical \textbf{PLS} problem.~\citet{westerhuis2001deflation} have rewritten the \textbf{MBPLS} problem by re-weighting a standard \textbf{PLS} model built on the concatenated matrix of the $T$ blocks.~\citet{bougeard2011multiblock} implemented the \textbf{MBPLS} algorithm in the \textbf{R}-package \textbf{ade4} with the super-component deflation version of~\citet{westerhuis2001deflation}. ~\citet{bougeard2011multiblock} bind the \textbf{MBPLS} problem to the \textbf{RA} problem (Redundancy Analysis) defined through the same criterion as \textbf{PLS}, covariance maximization of the covariates projection and response projections, but here weights $(\mathbf{u}_t)_{t\in\llbracket1,T\rrbracket}$ are not directly constrained while components $(\mathbf{X}_t\mathbf{u}_t)_{t\in\llbracket1,T\rrbracket}$ are constrained to $\mathcal{L}_2$-norm equal to 1. Their solution uses a regularization by convexly balance the power of the variance-covariance matrix of each block towards the identity matrix, and permitting to solve the $\textbf{MBRA}$ (Multi-Block Redundancy Analysis) problem in the context of badly conditioned matrices. That solution has been generalized to the canonical correlation analysis by~\citet{tenenhaus2011regularized} and the corresponding method is called $\textbf{RGCCA}$. A sparse version of that method has been developed and detailed by~\citet{tenenhaus2014variable} using $\mathcal{L}_1$-norm regularization of the weights. Those  methods use \textbf{nipals} typed algorithms and the authors demonstrated their monotonically convergences.

Here, we propose a method to deal with variable selection on multi-block data structured with a specific case of missing data (some entire block rows are missing). The proposed approach is called \textbf{mdd-sPLS} for multi-block data-driven sparse PLS, the term data-driven has been chosen because of the highly interpretable nature of the penalization parameter, it corresponds to the minimum correlation accepted between $\bf X$ and $\bf Y$ variables to put it in the model. It shows good theoretical performances on variable selection and regularization capacities. Simulations and applications to real data sets exhibit its practical, interpretable and numerical interests in comparison to four baseline methods in the presence of missing samples.

The paper is organized as follows. Section~\ref{S:ddspls} describes the covariance-thresholding sparse \textbf{PLS}, called \textbf{ct-sPLS} when there are no missing values. Useful theoretical results are provided in this context. Section~\ref{S:3} details the proposed multi-block approach, (\textbf{mdd-sPLS}), and the chosen algorithm to deal with missing data, denoted \textbf{Koh-Lanta}.  Section~\ref{S:4} studies the numerical behavior of the \textbf{mdd-sPLS} approach through simulations. Section~\ref{S:5} provides results on a real data set from an Ebola rVSV phase 1 vaccine trial. Concluding remarks are given in Section~\ref{S:6}. The method is implemented in the \textbf{R}-package \url{https://cran.r-project.org/package=ddsPLS} and in the \textbf{Python}-package \url{https://pypi.org/project/py_ddspls/}.

\section{Covariance-Thresholding Sparse PLS (ct-sPLS)}
\label{S:ddspls}

In this section, we focus on the mono-block context associated with the two data matrices ${\bf X}\in {\mathbb{R}}^{n\times p}$ and ${\bf Y}\in \mathbb{R}^{n\times q}$ without missing data. 
First, let us define the soft-thresholding operator, applied term to term to a matrix, as 
$$S_\lambda:\quad x \rightarrow \mbox{sign}(x)(|x|-\lambda)_+,$$
where $\lambda\in[0,1]$ is a regularization parameter  and $(.)_+=\max(0,.)$.
Thus, the matrix $S_\lambda(\mathbf{Y}^T\mathbf{X}/(n-1))$ is the soft-thresholded version of the empirical correlation matrix between $\mathbf{X}$ and $\mathbf{Y}$ (since these matrices are assumed to be standardized) with respect to the threshold $\lambda$.

The proposed \textbf{ct-sPLS} problem, written for a $R$-dimensional decomposition, is defined as 
\begin{equation}
\begin{array}{cl}
\displaystyle \max_{\mathbf{U}\in\mathbb{R}^{p\times R}} & \sum_{r=1}^R ||S_\lambda\Big(\frac{\mathbf{Y}^T\mathbf{X}}{n-1}\Big)\mathbf{u}^{(r)}||_2^2\\
 \text{s.t.} &  \mathbf{U}^T\mathbf{U}=\mathbb{I}_R.
\end{array}
\label{equ:ddspls_problem}
\end{equation}
Solution of problem~\eqref{equ:ddspls_problem} is
$$\mathbf{U}=\mathbf{SVD}_R\Big(S_\lambda\left(\frac{\mathbf{Y}^T\mathbf{X}}{n-1}\right)\Big)=:\textbf{ct-sPLS}(\mathbf{X},\mathbf{Y},\lambda,R)\in \mathbb{R}^{p\times R},$$
where  $\mathbf{SVD}_R({\bf M})$ gives the $R$ first right-singular-vectors of $\bf M$.
The $R$ components rely on the same regularization parameter $\lambda$, which can be obtained thanks to cross-validation. Note that there is no deflation step for the construction of  the $R$ different axes constructions. Similarly, let $\mathbf{V}\in \mathbb{R}^{q\times R}$ be the $R$ first left singular vectors of $S_\lambda(\mathbf{Y}^T\mathbf{X}/(n-1))$. The associated components of $\bf X$ and $\bf Y$ are respectively denoted by $\mathbf{T=XU}\in \mathbb{R}^{n\times R}$ and $\mathbf{S=YV}\in \mathbb{R}^{n\times R}$.

It is relevant to have a regression model in the context of supervised learning in order to explain $\bf Y$ by $\bf X$. In the following a linear approximation, $\mathbf{Y\approx XB}$, is shown to be reasonable, where $\bf B$ is constructed using the \textbf{ct-sPLS} components $\bf T$ and $\bf S$. 

Some theoretical results are formulated hereafter.
Section~\ref{sub:sparsity} proves that \textbf{ct-sPLS} method is a sparse method.
Section~\ref{sub:consistency} provides consistency results. This implies the considered method indeed performs regularization over the data and also allows building the associated regression model based on the \textbf{ct-sPLS} components.




\subsection{Sparsity}
\label{sub:sparsity}
The following theorem illustrates the ability of the \textbf{ct-sPLS} method to provide a sparse solution $\bf U$.

\begin{theorem}\label{th:3}
Let $(\mathbf{X},\mathbf{Y})\in \mathbb{R}^{n\times p}\times\mathbb{R}^{n\times q}$ assumed to be standardized. Let $\lambda\in[0,1]$ such as $\mathbf{S}=S_\lambda\left(\frac{\mathbf{Y}^T\mathbf{X}}{n-1}\right)$ is not null and 
$\mathbf{u}\in\mathbb{R}^{p}$ and $\mathbf{v}\in\mathbb{R}^{q}$ respectively the right and left singular vectors associated to the same, non null, singular value denoted  $\sqrt{\theta}$. Then, we have \[
\begin{split}
\forall i= \llbracket1,p\rrbracket: \quad <\mathbf{s}^{(i)},\mathbf{v}>=0 \iff u_i=0
\end{split}
\]
where ${u}_i$ stands for the $i^{th}$-element of $\mathbf{u}$ and
${\bf s}^{(i)}$ stands for the $i^{th}$-column of $\mathbf{S}$.
\end{theorem}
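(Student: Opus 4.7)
The plan is very short because the claim is essentially a restatement of the defining SVD identity. Since $\mathbf{S}=S_\lambda(\mathbf{Y}^T\mathbf{X}/(n-1))\in\mathbb{R}^{q\times p}$, the columns $\mathbf{s}^{(i)}$ of $\mathbf{S}$ live in $\mathbb{R}^q$, which is the same space as $\mathbf{v}$, so the inner product $\langle \mathbf{s}^{(i)},\mathbf{v}\rangle$ makes sense.

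First I would invoke the singular value decomposition relations attached to the pair $(\mathbf{u},\mathbf{v})$ with singular value $\sqrt{\theta}\neq 0$, namely
\begin{equation*}
\mathbf{S}\mathbf{u}=\sqrt{\theta}\,\mathbf{v}\qquad\text{and}\qquad \mathbf{S}^T\mathbf{v}=\sqrt{\theta}\,\mathbf{u}.
\end{equation*}
The second identity is the useful one. Then I would simply observe that the $i^{th}$ entry of the vector $\mathbf{S}^T\mathbf{v}\in\mathbb{R}^p$ is, by definition of matrix-vector multiplication, exactly $\langle \mathbf{s}^{(i)},\mathbf{v}\rangle$, since row $i$ of $\mathbf{S}^T$ is the column $\mathbf{s}^{(i)}$ of $\mathbf{S}$. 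Hence
\begin{equation*}
\langle \mathbf{s}^{(i)},\mathbf{v}\rangle=\sqrt{\theta}\,u_i,\qquad i=1,\dots,p.
\end{equation*}

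To conclude, I would use the hypothesis $\sqrt{\theta}\neq 0$: the above scalar equality immediately gives the equivalence $\langle \mathbf{s}^{(i)},\mathbf{v}\rangle=0\iff u_i=0$, which is exactly the statement of the theorem. There is no real obstacle here; the only subtlety worth pointing out in the written-up version is that the hypothesis that $\mathbf{S}$ is not identically the zero matrix together with $\sqrt{\theta}\neq 0$ guarantees that the SVD pair $(\mathbf{u},\mathbf{v})$ exists and that the division by $\sqrt{\theta}$ (implicitly used when reading the equivalence both ways) is legitimate. The statement can thus be read as saying that the sparsity pattern of $\mathbf{u}$ is exactly inherited from the orthogonality pattern of the columns of the soft-thresholded covariance matrix against $\mathbf{v}$, which is precisely the mechanism through which the soft-thresholding parameter $\lambda$ induces sparsity in the \textbf{ct-sPLS} loading vector.
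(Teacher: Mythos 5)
Your proof is correct and is essentially the paper's own argument in a slightly more streamlined form: the paper derives the same scalar identity $\langle\mathbf{s}^{(i)},\mathbf{v}\rangle=\sqrt{\theta}\,u_i$ by starting from $\mathbf{S}^T\mathbf{S}\mathbf{u}=\theta\mathbf{u}$ and substituting $\mathbf{S}\mathbf{u}=\sqrt{\theta}\mathbf{v}$, whereas you read it off directly from $\mathbf{S}^T\mathbf{v}=\sqrt{\theta}\mathbf{u}$. Both then conclude by dividing out the nonzero singular value, so the two arguments coincide in substance.
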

\begin{proof}
According to the definition of $\mathbf{u}$, $\mathbf{v}$ and $\sqrt{\theta}$, we get   $u_i=0\iff(\mathbf{S}^T\mathbf{S}\mathbf{u})_i=\theta u_i=0$. Moreover, $(\mathbf{S}^T\mathbf{S}u)_i={\mathbf{s}^{(i)}}^T\mathbf{S}\mathbf{u}=<{\mathbf{s}^{(i)}},\mathbf{S}\mathbf{u}>$. Since $\mathbf{S}\mathbf{u}=\sqrt{\theta}\mathbf{v}$, by definition of the left and right singular vectors, we get $<{\mathbf{s}^{(i)}},\mathbf{S}\mathbf{u}>=<{\mathbf{s}^{(i)}},\sqrt{\theta}\mathbf{v}>=\sqrt{\theta}<{\mathbf{s}^{(i)}},\mathbf{v}>=0\iff<{\mathbf{s}^{(i)}},\mathbf{v}>=0$.
\end{proof}

In Theorem~\ref{th:3}, the nullity of an element of weights $\bf u$ associated with $\mathbf{X}$ is the only one case considered. Equivalent results can straightforwardly obtained for the  weights $\bf v$ associated with $\mathbf{Y}$ considering $\mathbf{S}^T$ instead of $\mathbf{S}$.
This implies that the \textbf{ct-sPLS} method simultaneously selects variables in the $\mathbf{X}$ part and in the $\mathbf{Y}$ part. The condition obtained in the previous theorem is computationally unacceptable for the user and in practice \textbf{ct-sPLS} user would appreciate an upper-bound to the cardinality of $\mathbf{u}$ and $\mathbf{v}$, the number of variables selected for the $\bf X$ part and for the $\bf Y$ part respectively. The next corollary indicates that for all $\lambda$ in $[0,1]$, there exists an easy to compute upper bound to the number of variable selected for $\bf X$ and for $\bf Y$. Let $\mathbf{Card}({\bf u})=\#\{\mbox{Non null elements in }{\bf u}\}$.

\begin{corollary}
\label{coro:1}
Under assumptions of Theorem~\ref{th:3}, we have
\[
\begin{split}
\mathbf{Card}({\bf u})&\leq p-\#\{\textit{Null columns of $S_\lambda\left(\frac{\mathbf{Y}^T\mathbf{X}}{n-1}\right)$}\},\\
\mathbf{Card}({\bf v})&\leq q-\#\{\textit{Null rows of $S_\lambda\left(\frac{\mathbf{Y}^T\mathbf{X}}{n-1}\right)$}\}.
\end{split}
\]
\end{corollary}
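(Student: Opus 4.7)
The plan is to derive both bounds as direct consequences of Theorem~\ref{th:3} by observing that a null column (resp.\ row) of $\mathbf{S}=S_\lambda(\mathbf{Y}^T\mathbf{X}/(n-1))$ automatically forces the corresponding coordinate of $\mathbf{u}$ (resp.\ $\mathbf{v}$) to vanish.

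First, for the bound on $\mathbf{Card}(\mathbf{u})$, I would apply Theorem~\ref{th:3} contrapositively: an index $i\in\llbracket 1,p\rrbracket$ contributes a non-zero coordinate to $\mathbf{u}$ only if $\langle \mathbf{s}^{(i)},\mathbf{v}\rangle\neq 0$. In particular, if $\mathbf{s}^{(i)}=\mathbf{0}$ then $\langle \mathbf{s}^{(i)},\mathbf{v}\rangle=0$ trivially, which by Theorem~\ref{th:3} gives $u_i=0$. Hence the support of $\mathbf{u}$ is contained in the set of indices $i$ for which $\mathbf{s}^{(i)}$ is a non-null column of $\mathbf{S}$, and taking cardinalities yields
\[
\mathbf{Card}(\mathbf{u})\leq p-\#\{\text{Null columns of }\mathbf{S}\},
\]
which is the first inequality.

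Second, for the bound on $\mathbf{Card}(\mathbf{v})$, I would invoke the remark made just after Theorem~\ref{th:3}: the roles of $\mathbf{u}$ and $\mathbf{v}$ can be exchanged by replacing $\mathbf{S}$ by $\mathbf{S}^T$, since the left singular vectors of $\mathbf{S}$ are the right singular vectors of $\mathbf{S}^T$ associated with the same singular value $\sqrt{\theta}$. The columns of $\mathbf{S}^T$ are exactly the rows of $\mathbf{S}$, so the same argument as above gives
\[
\mathbf{Card}(\mathbf{v})\leq q-\#\{\text{Null rows of }\mathbf{S}\}.
\]

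There is no real obstacle here; the corollary is essentially a bookkeeping consequence of Theorem~\ref{th:3}. The only subtle point to verify is that the hypothesis ``$\mathbf{S}$ is not null'' inherited from Theorem~\ref{th:3} guarantees the existence of a non-zero singular value $\sqrt{\theta}$ and of associated singular vectors $\mathbf{u},\mathbf{v}$, so that the equivalence in Theorem~\ref{th:3} is applicable and both upper bounds are well-defined (and non-trivial whenever $\lambda>0$ produces some null rows or columns).
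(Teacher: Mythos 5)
Your proposal is correct and follows essentially the same route as the paper: a null column (resp.\ row) of $S_\lambda\left(\frac{\mathbf{Y}^T\mathbf{X}}{n-1}\right)$ trivially has zero inner product with $\mathbf{v}$ (resp.\ $\mathbf{u}$), so Theorem~\ref{th:3} forces the corresponding coordinate to vanish, and counting these indices gives both bounds. Your added remarks on the transposition argument for $\mathbf{v}$ and on the non-nullity hypothesis are just slightly more explicit versions of what the paper leaves implicit.
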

\begin{proof}
Applying Theorem~\ref{th:3} and counting cases where the $i^{th}$ column (respectively the $j^{th}$ row) of $S_\lambda\left(\frac{\mathbf{Y}^T\mathbf{X}}{n-1}\right)$ are filled with null elements only, then the corresponding ${\bf u}$ (respectively ${\bf v}$) coefficients are equal to $0$, which demonstrates the corollary.
\end{proof}

The previous corollary is applied, for a given $\lambda$, to any singular-space of $S_\lambda(\frac{\mathbf{Y}^T\mathbf{X}}{n-1})$. This is only data-dependent, under the $\lambda$ user appreciation.
Note that the cardinality of any axis of $\bf U$ and $\bf V$ is not monotonic. In Appendix~\ref{app:1}, a simulation study exhibits a particular case, for $R=1$ considering the singular-space associated with the largest singular-value, in which the cardinality of $\bf u$ is not monotonic.

\subsection{Consistency}
\label{sub:consistency}

Under additional (theoretical) assumptions, we will show that it is relevant to consider a regression model in order to explain $\bf Y$ by $\bf X$ using the \textbf{ct-sPLS} components $\bf T=YV$ and $\bf S=XU$. 

As described by~\citet{johnstone2004sparse}, let us consider that $\bf X$ and $\bf Y$ follow two spiked covariance models: 
\begin{equation}
\left\{\begin{aligned}
{\bf X}&= {\bf L}{\bf \Omega}_x^{1/2}{\bf U}^T_{mod}+{\bf E}_x\\
{\bf Y}&= {\bf L}{\bf \Omega}_y^{1/2}{\bf V}^T_{mod}+{\bf E}_y\\
\end{aligned}
\right.,
\label{equ:johnstone_2004}
\end{equation}
where ${\bf \Omega}_x$ and ${\bf \Omega}_y$ are $R$-dimensional diagonal matrices with strictly positive diagonal elements, ${\bf U}_{mod}\in\mathbb{R}^{p\times R}$ and ${\bf V}_{mod}\in\mathbb{R}^{q\times R}$ are two matrices with orthonormal columns, ${\bf L}\in\mathbb{R}^{n\times R}$ is a matrix where elements are i.i.d. standard Gaussian random effects, ${\bf E}_x\in\mathbb{R}^{n\times p}$ (resp. ${\bf E}_y\in\mathbb{R}^{n\times q}$) is a matrix such that each row follows the standard multivariate normal distribution $N_p(0,\mathbb{I}_p)$ (resp. $N_q(0,\mathbb{I}_q)$) and the $n$ rows are independent and mutually independent noise vectors. Note that ${\bf L}$ introduces a common structure to $\bf X$ and $\bf Y$ models. The intuition developed by~\citet{JMLR:v17:15-160} is applied to ${\bf Y^TX}/(n-1)$ (instead of ${\bf X^TX}/n-\mathbb{I}_p$ in the original paper) and permits to use their Theorem 1 inferring the consistency of $S_\lambda({\bf Y^TX}/(n-1))$ to the population covariance ${\bf V}_{mod}{\bf \Omega}_y^{1/2}{\bf \Omega}_x^{1/2}{\bf U}^T_{mod}$. That result is of prior importance because it implies that the solution of the optimization problem~\eqref{equ:ddspls_problem} is such that
\begin{itemize}[topsep=0pt,itemsep=-1ex]
    \item $S_\lambda({\bf Y^TX}/(n-1)){\bf U}$ and ${\bf Y^TX}{\bf U}$ span approximately the same $R$-dimensional subspace of $\mathbb{R}^q$,
    \item $S_\lambda({\bf X^TY}/(n-1)){\bf V}$ and ${\bf X^TY}{\bf V}$ span approximately the same $R$-dimensional subspace of $\mathbb{R}^p$.
\end{itemize}
Therefore ${\bf U}$ and ${\bf V}$ permit to find a cross-subspace of $\mathbb{R}^n$ such as ${\bf S=XU}$ and ${\bf T=YV}$ span approximately the same $R$-dimensional subspace of $\mathbb{R}^n$.
From this result, it is then possible to construct a regression model to explain $\bf Y$ from $\bf X$ such as $\mathbf{Y\approx XB}$. The matrix  $\bf B$, obtained from the \textbf{ct-sPLS} components $\bf T$ and $\bf S$, is detailed in the multi-block framework, see Section~\ref{subsub:reg_multi}.
\section{Multi-Block Data-Driven sparse PLS (mdd-sPLS)}
\label{S:3}

In this section, let us focus on the multi-block context associated with the $T$ data matrices ${\bf X}_t\in \mathbb{R}^{n\times p_t}, t=1\dots, T$ and a single response matrix $\mathbf{Y}\in \mathbb{R}^{n\times q}$.
 Here missing samples might be in any of the $T$ blocks  ${\bf X}_t$, which means that any row of any block of covariates can be missing, but not in the response matrix $\bf Y$. 
 
 The  idea of the multi-block data-driven sPLS (\textbf{mdd-sPLS}) is to use \textbf{ct-sPLS} properties to find structure between the ${\bf X}_t$'s and the response matrix $\bf Y$ including variable selection in the context of missing samples.
Contrary to some existing multi-block approaches, the \textbf{mdd-sPLS} does not consider correlations between covariate blocks covariances but only covariate block ${\bf X}_t$  and response block $\bf Y$ covariances. Two sequential optimization problems are then defined, in terms of intra/inter-blocks weights, particularly stable and efficient in terms of prediction. Note that the proposed method is non ``component-wise'' in the sense that all the axes are built together and no deflation is used as to answer variance constraint problems discussed in the introduction and inherent to the deflation. 
Since missing samples can be both in the train sample and in the test sample, a new specific algorithm, called \textbf{Koh-Lanta} hereafter, has been developed. Note that this algorithm has been designed for the \textbf{mdd-sPLS} approach  but  can be extended to any supervised multi-block method.

Sections~\ref{sub:mddspls_crit} and \ref{subsub:reg_multi} provide the description of the \textbf{mdd-sPLS} method. 
The corresponding algorithms are given in Section~\ref{subsub:algor}.
The imputation algorithm \textbf{Koh-Lanta}  is described in Section~\ref{subs:koh_lanta}. The computational complexity of \textbf{mdd-sPLS} is studied in Section~\ref{sub:compcomp}. Finally the adaptation to classification (rather than regression) is discussed and illustrated in Section~\ref{sec:adapt_class}.

\subsection{Description}
\label{sub:mddspls_crit}

Let
$R\in \llbracket0, \min( \min_{t\in\llbracket1, T\rrbracket} ( \mbox{rank}(\mathbf{X}_t)), \mbox{rank}(\mathbf{Y}))\rrbracket$.
The \textbf{mdd-sPLS} approach splits into three steps. 

\begin{itemize}
    \item {\sc Step 1: solve independently the $T$  \textbf{ct-sPLS} 
    problems based on ${\bf X}_t$ and $\bf Y$.} 
    
    It provides the weights $(\mathbf{U}_t)_{t\in\llbracket 1,T\rrbracket}$
    $$\mathbf{U}_t := \textbf{ct-sPLS}(\mathbf{X}_t,\mathbf{Y},\lambda,R)=\mathbf{SVD}_R(\mathbf{M}_t)\in \mathbb{R}^{p_t\times R},$$
    where $\mathbf{M}_t=S_\lambda\big(\frac{\mathbf{Y}^T\mathbf{X}_t}{n-1}\big).$
    
\item {\sc Step 2: combine information from the $T$ blocks.}
    
    Let $\mathbf{Z}=[\mathbf{M}_1\mathbf{U}_1,\cdots,\mathbf{M}_T\mathbf{U}_T]$. In order to combine information from all the blocks and to recover a common $R$-dimensional subspace, we introduce the super-weights $(\underline{\boldsymbol{\mathbf{\beta}}}_t)_{t\in\llbracket 1,T\rrbracket}\in\mathbb{R}^{R\times R}$ associated with block $t$ defined as
    $$\underline{\boldsymbol{\beta}}=\big[\underline{\boldsymbol{\mathbf{\beta}}}_1^T, \cdots,\underline{\boldsymbol{\mathbf{\beta}}}_T^T\big]^T=\mathbf{SVD}_R(\mathbf{Z})\in \mathbb{R}^{RT\times R}.$$
    
\item {\sc Step 3: predict $\bf Y$ using a regression model.}

The weights $(\mathbf{U}_t)_{t\in\llbracket 1,T\rrbracket}$ and the super-weights $(\underline{\boldsymbol{\mathbf{\beta}}}_t)_{t\in\llbracket 1,T\rrbracket}$ are used to build the matrices $(\mathbf{B}_t)_{t\in\llbracket 1,T\rrbracket}$ approximating the regression model 
    \begin{equation}
\mathbf{Y}\approx\sum_{t=1}^T\mathbf{X}_{t}\mathbf{B}_t \in \mathbb{R}^{{n}\times {q}},
\label{equ:multi_reg_Bt}
\end{equation}
This step is detailed in the following section.
\end{itemize}

\begin{remark}
If there exists a covariate matrix ${\bf X}_{t^*}$ such that $S_\lambda\big(\mathbf{Y}^T\mathbf{X}_{t^*}/(n-1)\big)=\mathbf{0}$, the corresponding solution to the first step of \textbf{mdd-sPLS} is the null matrix. The same process is applied if more that one matrix is null. If all the matrices are null, the solution is the empty solution.
\end{remark}

\subsection{Details on Method's Third Step}
\label{subsub:reg_multi}

The underlying regression model is constructed with the same statistical assumptions as described by~\citet{johnstone2004sparse}, already used in~\eqref{equ:johnstone_2004} in the mono-block ({\bf ct-sPLS}) context and denoted as spike covariance models
\begin{equation*}
\left\{\begin{aligned}
{\bf X}_1&= {\bf L}{\bf \Omega}_1^{1/2}{\bf U}^T_{1,mod}+{\bf E}_1\\
&\vdots\\
{\bf X}_T&= {\bf L}{\bf \Omega}_T^{1/2}{\bf U}^T_{T,mod}+{\bf E}_T\\
{\bf Y}&= {\bf L}{\bf \Omega}_y^{1/2}{\bf V}^T_{mod}+{\bf E}_y\\
\end{aligned}
\right.
\label{equ:johnstone_2004_multi}
\end{equation*}
where $({\bf \Omega}_t)_{t\in\llbracket1,T\rrbracket}$ and ${\bf \Omega}_y$ are $R$-dimensional diagonal matrices with strictly positive diagonal elements. $({\bf U}_{t,mod}\in\mathbb{R}^{p_t\times R})_{t\in\llbracket1,T\rrbracket}$ and ${\bf V}_{mod}\in\mathbb{R}^{q\times R}$ are  matrices with orthonormal columns. ${\bf L}\in\mathbb{R}^{n\times R}$ is a matrix where elements are i.i.d. standard Gaussian random effects, $({\bf E}_t\in\mathbb{R}^{n\times p_t})_{t\in\llbracket1,T\rrbracket}$ (respectively ${\bf E}_y\in\mathbb{R}^{n\times q}$) are matrices such that each row follows the standard multivariate normal distribution $(N_{p_t}(0,\mathbb{I}_{p_t}))_{t\in\llbracket1,T\rrbracket}$ (respectively $N_q(0,\mathbb{I}_q)$) and the $n$ rows are independent and mutually independent noise vectors. Let us mention that the matrix ${\bf L}$ does not depend of $t$ and thus introduces a common structure between the ${\bf X}_t$'s and $\bf Y$ models. Moreover let us recall that the matrix ${\boldsymbol \beta}$ of super-weights  gathers information from the different covariates corresponding to the different blocks.

Under these assumptions, using Theorem 1 from~\citet{JMLR:v17:15-160} and extending the context of Section~\ref{sub:sparsity} to $T$ soft-thresholded matrices  $S_\lambda(\mathbf{Y}^T\mathbf{X}_t/(n-1))$, Theorem 1 insures that  $\mathbf{U}_t\in\mathbb{R}^{p_t\times R}$ (the optimal $R$-dimensional right-singular matrix of $S_\lambda(\mathbf{Y}^T\mathbf{X}_t/(n-1))$) is close to the optimal $R$-dimensional right-singular matrix of $\mathbf{Y}^T\mathbf{X}_t$. 

Let us introduce the following notations
\begin{equation*}
\left\lbrace
\begin{array}{ll}
\mathbf{U}_{t,super}&=\mathbf{U}_t \underline{\boldsymbol{\mathbf{\beta}}}_t\in\mathbb{R}^{p_t\times R}\\
\mathbf{T}_{super}&=\sum_{t=1}^T\mathbf{X}_{t}\mathbf{U}_{t,super}\in\mathbb{R}^{n\times R}\\
\mathbf{V}_{super}&=norm_2(\mathbf{Z}\underline{\boldsymbol{\mathbf{\beta}}})\in\mathbb{R}^{q\times R}\\
\mathbf{S}_{super}&=\mathbf{YV}_{super}\in\mathbb{R}^{n\times R}\\
\end{array}
\right.,
\label{equ:equMulti_5_ref_last_t_mono}
\end{equation*}
where $norm_2$ is the function that returns the columns normalized to a $\mathcal{L}_2$-norm equal to 1 if the corresponding column is non null and 0 otherwise.

 The aim is to provide weights that favor the most predictive directions ${\bf u}_{t}^{(r)}$ for $\bf Y$. The matrix $\mathbf{U}_{t,super}$ is the super-weight corresponding to the $t^{th}$-block. The matrix $\mathbf{T}_{super}$ is the super-component for covariate part, which is the most predictive of $\bf Y$. The matrix ${\bf V}_{super}$ is the weight enabling to build the component ${\bf S}_{super}$ of the response part. 
 The component $\mathbf{T}_{super}$ and ${\bf S}_{super}$ describes the $n$ individuals from the point of view of the covariates and the response. Let us write the regression model
\begin{equation}
{\bf S}_{super}={\bf T}_{super}\mathbf{B}_{0,mod} + {\bf E}_{ort},
\label{equ:multi_reg_Bt_ort}
\end{equation}
where ${\bf E}_{ort}\in\mathbb{R}^{n\times R}$ a residual matrix and $\mathbf{B}_{0,mod}\in\mathbb{R}^{R\times R}$ the  matrix of parameters. Freedom is given to the user to determine if the model is sufficiently informative. Let us introduce ${\bf V}_{ort}=\mathbf{SVD}_R({\bf T}_{super})$ and ${\bf \Delta}_{ort}$ the diagonal matrix filled with the corresponding square singular values. Using Moore-Penrose pseudo-inverse of ${{\bf T}_{super}}^T{\bf T}_{super}$, the parameters $\mathbf{B}_{0,mod}$ can be estimated by
\begin{equation*}
{\bf B}_{0}={\bf V}_{ort}{\bf \Delta}_{ort}^{-1}{\bf V}_{ort}^T{\bf T}_{super}^T{\bf S}_{super},
\end{equation*}
which is the best solution in the regression problem~\eqref{equ:multi_reg_Bt_ort} according to~\citet{penrose1956best}. 
%
One can therefore rewrite~\eqref{equ:multi_reg_Bt_ort} as
$${\bf S}_{super}={\bf YV}_{super}  \approx \sum_{t=1}^T{\bf X}_{t}{\bf U}_{t,super}{\bf B}_{0}.$$
And so
   $${\bf Y}\approx{\bf YV}_{super}{\bf V}_{super}^T\approx \sum_{t=1}^T{\bf X}_{t}{\bf U}_{t,super}{\bf B}_{0}{\bf V}_{super}^T.$$
 This approximation is discussed in the next remark. Finally, by identification with~\eqref{equ:multi_reg_Bt}, one obtains
   \begin{equation}
   {\bf B}_{t} ={\bf U}_{t,super}{\bf B}_{0}{\bf V}_{super}^T.
\label{equ:multi_reg_Bt_ort_est_sol}
\end{equation}

\begin{remark}
Since  the application ${\bf Y} \rightarrow {\bf Y}{\bf V}_{super}{\bf V}_{super}^T$ is a projection on the common subspace between the response and the covariate matrices, it is usual~\citep[see for example][]{manne1987analysis} to write, instead of~\eqref{equ:multi_reg_Bt_ort_est_sol}, the approximation 
${\bf Y}\approx\sum_{t=1}^T{\bf X}_{t}{\bf B}_{t}$ already introduced in (\ref{equ:multi_reg_Bt}).
This notation takes into account that no better regression model is accessible in the context of \textbf{PLS} modelling. The proposed method allows to know which variable of the \textbf{Y} part is indeed predicted by the model, since the Theorem~\ref{th:3} insures the {\bf mdd-sPLS} method selects variables both in the $\mathbf{X}_t$'s and in $\textbf{Y}$. 
\end{remark}

\subsection{Algorithms of the Multi-Data-Driven-sPLS}
\label{subsub:algor}

First, the  algorithm of the \textbf{ct-sPLS} method is provided in Algorithm~\ref{alg:ddspls_algo}.
 Then, the algorithm of \textbf{mdd-sPLS} approach is described in Algorithm~\ref{alg:mddspls_algo}.
 In the following,  $\odot$ is the Hadamard product $\mathbf{A}//\mathbf{B}$ denotes the term-to-term division operator of matrices (or vectors) $\mathbf{A}$ and $\mathbf{B}$.
Let $\boldsymbol{1}_n$ denote the $n$-dimensional vector of 1's and $diag$ which returns, for a given square matrix $\mathbf{X}$, the row vector of the diagonal elements of $\mathbf{X}$.
The notation $\mathcal{M}[.]$ makes it possible to extract any attribute of the considered object obtained from \textbf{ct-sPLS} or \textbf{mdd-sPLS} method, the available attributes correspond to the outputs of the corresponding algorithms. 
\begin{algorithm}[!ht]
\caption{ct-sPLS}\label{alg:ddspls_algo}
\begin{algorithmic}[1]
\Procedure{$\text{ct-sPLS}$}{$\mathbf{X},\mathbf{Y},\uplambda,R$}
\State $({\boldsymbol \mu}_x,{\boldsymbol \mu}_y)\gets \frac{1}{n}( \boldsymbol{1}_n^T\mathbf{X},\boldsymbol{1}_n^T\mathbf{Y})$
\State $({\boldsymbol \sigma}_x^2,{\boldsymbol \sigma}_y^2)\gets \frac{1}{n-1}\Big(\boldsymbol{1}_n^T\big((\mathbf{X}-\boldsymbol{1}_n{\boldsymbol \mu}_x)\odot(\mathbf{X}-\boldsymbol{1}_n{\boldsymbol \mu}_x)\big),\boldsymbol{1}_n^T\big((\mathbf{Y}-\boldsymbol{1}_n{\boldsymbol \mu}_y)\odot(\mathbf{Y}-\boldsymbol{1}_n{\boldsymbol \mu}_y)\big)\Big)$
\State $\mathbf{X}\gets (\mathbf{X}-\boldsymbol{1}_n{\boldsymbol \mu}_x)//({\boldsymbol{1}_n\boldsymbol \sigma}_x),\quad \mathbf{Y}_i\gets (\mathbf{Y}_i-{\boldsymbol \mu}_y)//({\boldsymbol{1}_n\boldsymbol \sigma}_y)$
\State $\mathbf{M}=S_\uplambda\big(\frac{\mathbf{Y}^T\mathbf{X}}{n-1}\big)$,\quad $\mathbf{U}\gets \mathbf{SVD}_R(\mathbf{M})$,\quad $\mathbf{V}\gets norm_2(\mathbf{M}\mathbf{U})$
\State \textbf{return} $(\mathbf{U},\mathbf{V},
\mathbf{M}$,${\boldsymbol \mu}_x,{\boldsymbol \sigma}_x,{\boldsymbol \mu}_y,{\boldsymbol \sigma}_y,\mathbf{X},\mathbf{Y})$
\EndProcedure
\end{algorithmic}
  \end{algorithm}

\begin{algorithm}[!ht]
\caption{mdd-sPLS}\label{alg:mddspls_algo}
\begin{algorithmic}[1]
\Procedure{mdd-sPLS}{$\mathbf{X}=\{\mathbf{X_t}\}_{t\in\llbracket1,T\rrbracket},\mathbf{Y},\lambda,R$}
\For{$t\in\llbracket1,T\rrbracket$}
\State $\mathcal{M}_t\gets\texttt{ct-sPLS}(\mathbf{X}_t,\mathbf{Y},\lambda,R)$
\EndFor
\State $({\boldsymbol \mu}_x,{\boldsymbol \mu}_y)\gets\big((\mathcal{M}_t[{\boldsymbol \mu}_x])_{t\in\llbracket 1,T\rrbracket},\mathcal{M}_1[{\boldsymbol \mu}_y]\big)$
\State $({\boldsymbol \sigma}_x,{\boldsymbol \sigma}_y)\gets\big((\mathcal{M}_t[{\boldsymbol \sigma}_x])_{t\in\llbracket 1,T\rrbracket},\mathcal{M}_1[{\boldsymbol \sigma}_y]\big)$
\State $\mathbf{M}\gets\big(\mathcal{M}_t[\mathbf{M}]\big)_{t\in\llbracket 1,T\rrbracket}$
\State $\underline{\boldsymbol{\beta}}\gets\mathbf{SVD}_R(\mathbf{Z})$
\State $\forall t\in\llbracket1,T\rrbracket,\quad\mathbf{U}_{t,super}\gets\mathcal{M}_t[\mathbf{U}] \underline{\boldsymbol{\mathbf{\beta}}}_t$, \quad $\mathbf{T}_{super}\gets\sum_{t=1}^T\mathbf{X}_{t}\mathbf{U}_{t,super}$
\State $\mathbf{V}_{super}\gets norm_2(\mathbf{Z}\underline{\boldsymbol{\beta}})$, \quad $\mathbf{S}_{super}\gets\mathbf{Y}\mathbf{V}_{super}$
\State ${\bf V}_{ort}\gets\mathbf{SVD}_R({\bf T}_{super})$,\quad ${\boldsymbol\Delta}_{ort}\gets({\bf T}_{super}{\bf V}_{ort})^T{\bf T}_{super}{\bf V}_{ort}$
\State ${\bf B}_{0}={\bf V}_{ort}{\bf \Delta}_{ort}^{-1}{\bf V}_{ort}^T{\bf T}_{super}^T{\bf S}_{super}$, \quad  $\mathcal{B}\gets\big({\bf U}_{t,super}{\bf B}_{0}{\bf V}_{super}^T\big)_{t\in\rrbracket 1,T\llbracket}$
\State \textbf{return} $(\mathbf{U}=(\mathcal{M}_t[\mathbf{U}])_{t\in\llbracket1,t\rrbracket},\mathbf{V}=(\mathcal{M}_t[\mathbf{V}])_{t\in\llbracket1,t\rrbracket},\mathbf{T}_{super},\mathbf{S}_{super},\mathcal{B},\mathbf{M},\boldsymbol{\mu}_x,\boldsymbol{\sigma}_x,\boldsymbol{\mu}_y,\boldsymbol{\sigma}_y)$
\EndProcedure
\end{algorithmic}
  \end{algorithm}  

Let $\mathcal{M}$ be the \textbf{mdd-sPLS} model built on train data sets $(\mathbf{X}_1^{(train)},\dots,\mathbf{X}_T^{(train)},\mathbf{Y}^{(train)})$. 
Given test data sets $\mathbf{X}^{(test)}:=(\mathbf{X}_1^{(test)},\dots,\mathbf{X}_T^{(test)})$ of $m$ individuals, the prediction operator, denoted by $\mathcal{P}$, allows to estimate the response $\mathbf{Y}^{(test)}$ by
\begin{equation*}
\mathcal{P}(\mathbf{X}^{(test)},\mathcal{M})=
\Big(\sum_{t=1}^T\big ((\mathbf{X}^{(test)}_{t,i}-\mathcal{M}[{\boldsymbol \mu}_x]_t)\odot(\mathcal{M}[{\boldsymbol \sigma}_y]//\mathcal{M}[{\boldsymbol \sigma}_x]_t)\big )\mathcal{M}[\mathcal{B}]_t+\mathcal{M}[{\boldsymbol \mu}_y]\Big)_{i\in\llbracket 1,m\rrbracket},
\end{equation*}
where $\mathbf{X}^{(test)}_{t,i}$ denotes the row vector containing the information in the data set $\mathbf{X}^{(test)}$ relative to block $t$ and individual $i$.
The treatment of missing values is described in the next section through the {\bf Koh-Lanta} algorithm. In the mono-block case without missing values, the classical regression data set, called \textbf{liver toxicity}, has been used to illustrate the behavior of the method and the corresponding results are provided in Appendix~\ref{app:Reg}.

\subsection{Koh-Lanta Algorithm: Impute \textit{Train} \& \textit{Test} Data Sets}
\label{subs:koh_lanta}

For most machine learning procedures, the user splits the available data into a \textit{train} data set used to build the model (in order to recover the underlying  structure) and a \textit{test} data set allowing to evaluate the validity and the precision of the model. Let $m$ 
denote the number of individuals in the \textit{train} data set. Without loss of generality the $m$ first individuals among the $n$ individuals are in the \textit{train} data set. In the following, mathematical symbols powered by a $o$ concern the \textit{train} part and mathematical symbols powered by $b$ concern the \textit{test} part. 
Missing samples can appear in the \textit{train} and/or in the \textit{test} data set.
A visual presentation of the data is given in Figure~\ref{fig:predict_str}.

\begin{figure}[p]
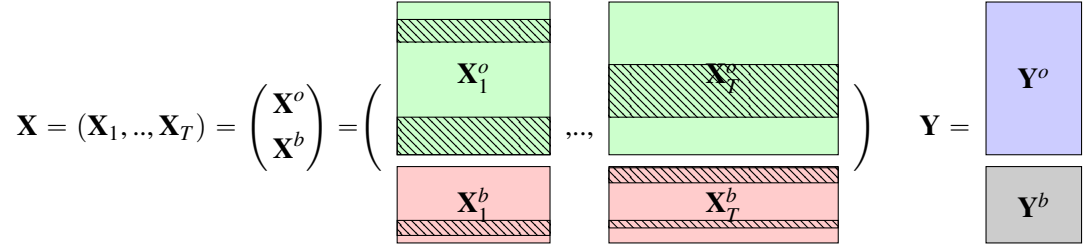

\begin{center}
\begin{blockmatrixtabular}
  \valignbox{
  $\mathbf{X}=(\mathbf{X}_1,..,\mathbf{X}_T)=
  \left(\begin{aligned}
  \mathbf{X}^o\\
  \mathbf{X}^b
  \end{aligned}\right)=
  $\Bigg(
  \begin{blockmatrixtabular}
  \fblockmatrixXaTrain     [0.8,1.0,0.8]{0.8in}{0.8in}{$\mathbf{X}_1^o$}\\
  \fblockmatrixXaTest     [1.0,0.8,0.8]{0.8in}{0.4in}{$\mathbf{X}_1^b$}
  \end{blockmatrixtabular}
  ,..,
    \begin{blockmatrixtabular}
  \fblockmatrixXbTrain     [0.8,1.0,0.8]{1.2in}{0.8in}{$\mathbf{X}_T^o$}\\
  \fblockmatrixXbTest     [1.0,0.8,0.8]{1.2in}{0.4in}{$\mathbf{X}_T^b$}
  \end{blockmatrixtabular}
  \Bigg)
  }&
  \valignbox{
  $\quad \mathbf{Y}=$
  \begin{blockmatrixtabular}
  \fblockmatrixY       [0.8,0.8,1.0]{0.5in}{0.8in}{$\mathbf{Y}^o$}\\
  \fblockmatrixY       [0.8,0.8,0.8]{0.5in}{0.4in}{$\mathbf{Y}^b$}
  \end{blockmatrixtabular}
  }
\end{blockmatrixtabular}\\
\end{center}
\caption{Structure of a $T$-blocks multi-block data set for \textit{train} (resp.\textit{test}) part, denoted by ``$o$'' (resp. `$b$'') symbol. Missing values are symbolized by hatched areas.
\label{fig:predict_str}}
\end{figure}

For any data set $\bf X$ the following notation $\mathbf{X}_{\texttt{blocks}|\texttt{indiv}|\texttt{variables}}$ allows to extract the information from $\bf X$ relative to the blocks indexed by \texttt{blocks}, to the individuals indexed $\texttt{indiv}$ and to the variables indexed by $\texttt{variables}$. The use of a dot as index for any of those indices means that all the indexes are taken into account. For instance, $\mathbf{X}_{\texttt{blocks}|\texttt{indiv}|.}$ takes all the variables for blocks in $\texttt{blocks}$ and individuals in $\texttt{indiv}$. To distinguish \textit{train} and \textit{test} parts of the data set, the following notations are introduced
\begin{equation*}
\begin{array}{|lcl|lcl|}
\hline
\multicolumn{3}{|c|}{\text{Train part}} & \multicolumn{3}{c|}{\text{Test part}}\\
\hline
\mathcal{I}^o &= &\{i=1,\dots,m\}&\quad
\mathcal{I}^b &= &\{i=(m+1),\dots,n\}\\
\mathcal{I}_t^o &= &\{i\in \mathcal{I}_{o}|\mathbf{X}_{t|i|.}\text{ is missing}\}&\quad
\mathcal{I}_t^b &= &\{i\in \mathcal{I}|\mathbf{X}_{t|i|.}\text{ is missing}\}\\
\mathcal{J}_t^o &= &\{i\in \mathcal{I}_{o}|\mathbf{X}_{t|i|.}\text{ is present}\}&\quad
\mathcal{J}_t^b &= &\{i\in \mathcal{I}|\mathbf{X}_{t|i|.}\text{ is present}\}\\
\hline
\end{array}
\end{equation*}
Moreover let us also define $\forall i \in \llbracket1,n\rrbracket,\quad \mathcal{K}_i = \{t\in \llbracket1,T\rrbracket|\mathbf{X}_{t|i|.}\text{ is missing}\}$.

The objective of the \textbf{Koh-Lanta} algorithm is to impute predicted values in place of missing samples in the \textit{train} data set and in the \textit{test} data set. Two stages have been designed to solve that problem. The first stage, denoted as ``The Tribe Stage'', imputes in the \textit{train} data set. It uses an EM based algorithm, alternating between estimating the general model $\mathcal{M}^\star$ and using that model for imputation of $\mathbf{X}_{.|\mathcal{I}^o|.}^\star$. The second stage, denoted as ``The Reunification Stage'', allows predicting the potential missing values of the \textit{test} data set. Those two stages are detailed below.

\subsubsection{Train-Data Imputation and Model Construction: The Tribe Stage}

The ``Tribe Stage'' can be described thanks to the following algorithm.
\begin{siderules}
\begin{enumerate}[topsep=0pt,itemsep=-1ex]
\setcounter{enumi}{-1}
\item $\forall t\in \llbracket1,T\rrbracket, \mathbf{X}_{t|\mathcal{I}^o_t|.}^\star$ are imputed to the mean variables estimated on $ \mathbf{X}_{t|\mathcal{J}^o_t|.}^\star$,
\item \text{Model construction: }
\begin{itemize}[topsep=0pt,itemsep=-1ex]
\item $\mathcal{M}^\star\gets \text{Mdd-sPLS}(\mathbf{X}_{.|\mathcal{I}^o|.}^\star,\mathbf{Y}_{\mathcal{I}^o},\lambda,R)$,
\item $\mathcal{V}^\star \gets \big\{\mathcal{V}^\star_t=\{j\in 1..p_t|\mathcal{M}^\star[\mathbf{U}]_t\neq 0\}\big\}_{t\in \llbracket1,T\rrbracket}$,
\end{itemize}
\item \text{Imputation process,} $\forall t\in \llbracket1,T\rrbracket|\mathcal{I}_t \neq\varnothing$:
\begin{itemize}[topsep=0pt,itemsep=-1ex]
\item $\mathcal{M}_t\gets \text{Mdd-sPLS}(\mathcal{M}^\star[\mathbf{S}_{super}]_{t|\mathcal{J}_t^o},\mathbf{X}_{t|\mathcal{J}_t^o|\mathcal{V}^\star_t},\lambda,R)$,
\item $\mathbf{X}_{t|\mathcal{I}_t^o|\mathcal{V}^\star_t}^\star\gets \mathcal{P}(\mathcal{M}^\star[\mathbf{S}_{super}]_{t|\mathcal{I}_t^o},\mathcal{M}_t)$
\end{itemize}
\item Back to 1. until convergence of $\mathcal{M}^\star[\mathbf{T}_{super}]$.
\item Return $(\mathcal{M}^\star,\mathbf{X}^\star,\mathcal{V}^\star)$
\end{enumerate}
\end{siderules}
Note that $\mathcal{M}^\star$ is always learned with $\mathbf{X}_{.|\mathcal{I}^o|.}^\star$ as predictors and $\mathbf{Y}_{\mathcal{I}^o}$ as response variables. In the imputation process, only variables in $\mathcal{V}^\star$ are considered because the objective is to impute, for a given block, only the variables on which the learning has been done. So this  imputation stage takes into account only the best features, the best elements for each block, for each tribe. This is why this step is called ``The Tribe Stage''.

\subsubsection{Test-Data Imputation and Prediction: The Reunification Stage}

The ``Reunification Stage'' can be described thanks to the following algorithm.
\begin{siderules}
\begin{itemize}[topsep=0pt,itemsep=-1ex]
\item $\forall i \in \mathcal{I}^b$:
\begin{enumerate}[topsep=0pt,itemsep=-1ex]
\item $\mathcal{M}_i\gets\text{Mdd-sPLS}(\mathcal{M}^\star[\mathbf{T}_{super}]_{\bar{K_i}|\mathcal{I}^0},\mathbf{X}_{K_i|\mathcal{I}^0|\mathcal{V}^\star_{K_i}},\lambda,R)$
\item $\mathbf{X}_{K_i|i|\mathcal{V}^\star_{\bar{K_i}} }^\star\gets \mathcal{P}(\mathcal{M}^\star[\mathbf{S}_{super}]_{t|\mathcal{I}_t^o},\mathcal{M}_i)$
\end{enumerate}
\item $\mathbf{Y}_{\mathcal{I}^b} \gets \mathcal{P}(\cup\{\mathbf{X}_{\bar{K_i}|\mathcal{I}^b|. },\mathbf{X}_{K_i|\mathcal{I}^b|.}^\star\},\mathcal{M}^\star)$
\item Return $\mathbf{Y}_{\mathcal{I}^b}$.
\end{itemize}
\end{siderules}
where $\mathcal{M}^\star$, $\mathbf{X}^\star$ and $\mathcal{V}^\star$ were obtained in the ``Tribe Stage''. This step is based on the model $\mathcal{M}^\star$ which is then used to impute missing values of the \textit{test} data set and then to predict its response.

\subsection{Computational Complexity}
\label{sub:compcomp}

In that section, for sake of simplicity, let $p$ denote the order of magnitude of the number of covariables in the $\textbf{X}_t$'s blocks. 
Only the highest operations in terms of computational time have been taken into account hereafter, operations with at least quadratic terms in $q$ or $p$. Plus it is supposed that the number $R$ of components computed is largely smaller than $q$, $p$ or even $n$.

Let us focus on the case where there are no missing values. 
The {\bf mdd-SPLS} algorithm needs in its first step the computation of $T$ \textbf{ct-sPLS}, only covariance and \textbf{SVD} computations are greedy, the corresponding complexity is then 
$O(nTqp+Tpq\min(p,q))$=$O(Tpq(n+min(p,q))$. 
In its second step, one \textbf{SVD} is performed and the associated complexity  is $O(Tpq\min(p,q))$.
In the third step,  another \textbf{SVD} is performed to get regression parameters but the corresponding complexity is $O(n)$ and so negligible against other.  Finally, the total computational complexity is
\begin{equation*}
    O\big(Tqp(n+\min(p,q))\big).
    \label{equ:comput}
\end{equation*}
The computation of the $T$ \textbf{SVD} in the first step clearly dominates the computational complexity. Note that different cases may arise:
\begin{itemize}[topsep=0pt,itemsep=-1ex]
    \item $n>>\max(q,p)$: this is the classical context of data analysis. The total complexity is $O(nTpq)$, which implies linearity in each of the parameters. Let us remark that the behavior is the same if $n>>\min(q,p)$.
    \item $p>>n$: this is the high dimensional context. If
    \begin{itemize}[topsep=0pt,itemsep=-1ex]
    \item $n>>q$, the total complexity becomes $O(nTqp)$ which is again linear in each of the parameters. \item $n<<q$, the total complexity is now $O(Tq^2p)$ which is quadratic in $q$ and linear in $T$ and $p$.
    \end{itemize}
\end{itemize}

 Figure~\ref{fig:time_complexity} shows computation times of the {\bf mdd-SPLS} algorithm according to different values of the parameters $n$, $T$, $p$ and $q$.
 For each set of parameters, 20 simulations were performed.
 The observed numerical results is clearly consistent with the previous theoretical total complexity.
 The numerical/theoretical results of the case ``$p>>n$ and $q>p$'' are not provided here but are similar to the case ``$p>>n$ and $q<p$" with linear behavior in $q$ and quadratic behavior in $p$.   

\begin{figure}[p]
	\centering
	\includegraphics[width = \textwidth]{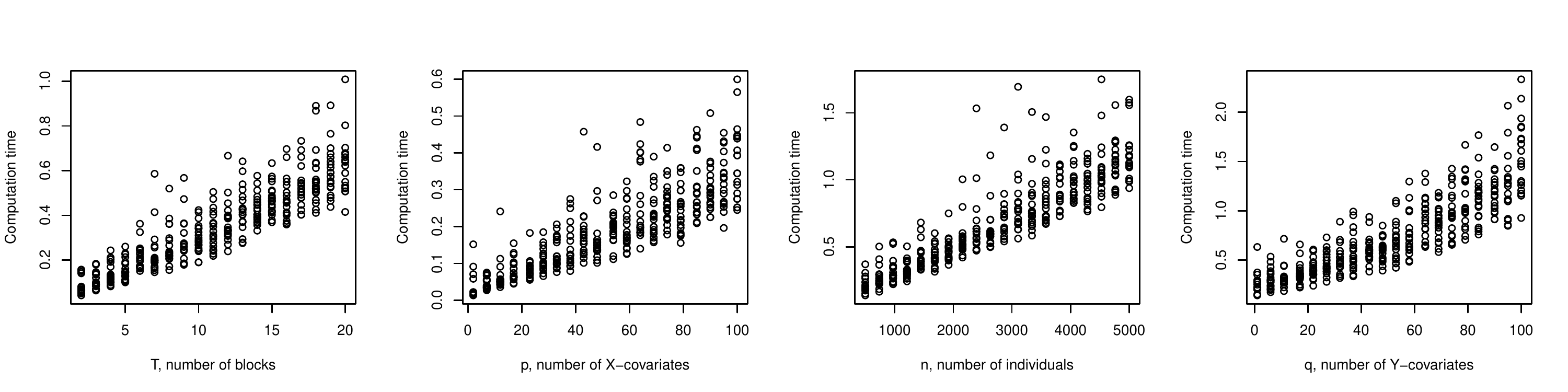}
	\includegraphics[width = \textwidth]{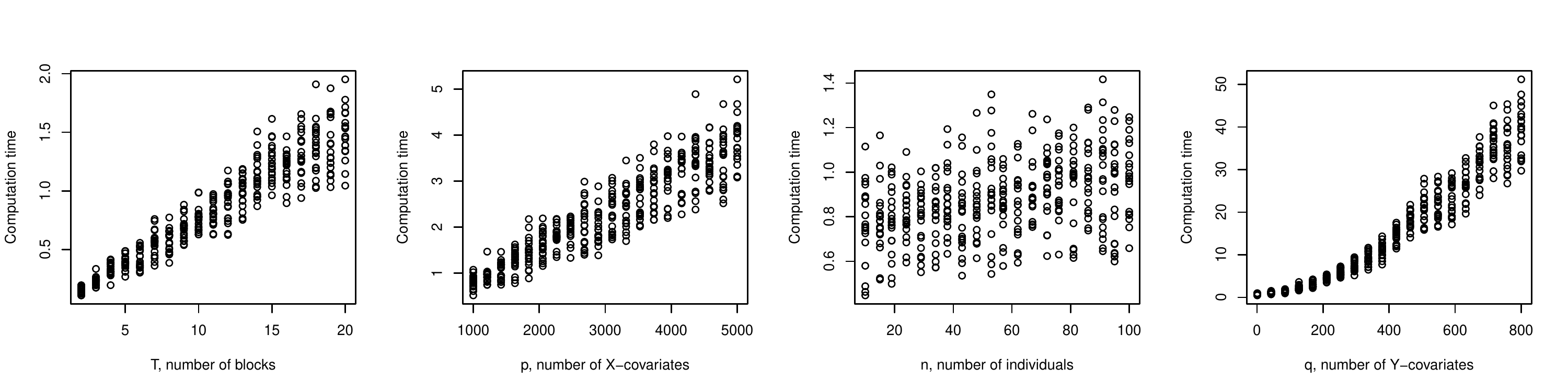}
	\caption{Computation times in both of the discussed regimes against $T$, $p$, $n$ and $q$. First row describes cases when $n>>\max(p,q)$ with $(n,T,q,p)=(1000,10,10,100)$ if not varying. Second row describes cases when $p>>n$ with $(n,T,q,p)=(20,10,10,1000)$ if not varying.\label{fig:time_complexity}}
\end{figure}

\subsection{Adaptation to the Classification Case}
\label{sec:adapt_class}

If the response variables are categorical, the regression model so far discussed for numerical response variables must be adapted to the problem of classification. \textbf{PLS} methods were showed to be efficient through a \textbf{Logistic Discrimination} model. The underlying methodology is based on a classic \textbf{PLS} on the dummy-standardized variables of the response matrix to build a convenient number of components. Those components are used to estimate a \textbf{Logistic Regression} model discriminating original classes, see for example~\citet{sjostrom1986pls} for \textbf{PLS-Discriminant Analysis} and~\citet{hosmer1989applied} for the \textbf{Logistic Regression}. \citet{sabatier2003two} showed that this method is biased. \citet{clemmensen2011sparse} recommend using a standard \textbf{Linear Discriminant Analysis} (\textbf{LDA}) in that context. Because slightly better results were found with the \textbf{LDA} method, it has been conserved in the \textbf{mdd-sPLS} implementation. The \textbf{mdd-sPLS} components are thus built on the standardized complete disjunctive coding of the class membership of the individuals for $R$ components, which are the covariates of the \textbf{LDA} model and where class memberships are responses.
Appendix~\ref{app:Classif} provides an illustration of the \textbf{mdd-sPLS} approach in this classification framework using the usual data set \textbf{Penicillium YES data set}.

\section{Simulation Study}\label{S:4}

Previous sections have presented \textbf{mdd-sPLS}, a supervised multi-block method which allows regularization and variable selection. A regression model has also been defined. An algorithm to deal with missing samples, which means that a row of a block might be missing in the \textit{train} or in the \textit{test} data set, has been proposed. Simulations have been performed to explore the performances of the \textbf{mdd-sPLS} method in term of prediction errors according to:
\begin{itemize}
    \item increasing proportion of missing values,
    \item decreasing number of individuals,
    \item decreasing correlations between the blocks ${\bf X}_t, t=1,\dots,T$.
\end{itemize}
Computation times and convergence successes have also been studied.
The data generating process is described in Section~\ref{sub:str_simu}. Competing methods (for imputation and prediction) are presented in Section~\ref{compet}. Notice that the \textbf{mdd-sPLS} (with \textbf{Koh-Lanta} algorithm) method is the only method that handles the missing data and makes the prediction of the response at the same time, denoted as ``all-in-one method''. The other methods are called ``two-steps methods'' because they first handle missing values and then build a model to predict response.
The methodological process used for the ``two-steps methods'' is detailed and takes into account the choices and comments of the corresponding authors. All numerical results of the simulation study are presented in Section~\ref{result-sim}.

\subsection{Data Generating Process}\label{sub:str_simu}

Let us consider a multi-block context with $T=10$ blocks of covariates.
The  $\textbf{X}_t$'s are generated with 
 \textbf{inter-block} relationships (i.e. links between the different blocks) and  \textbf{intra-block} relationships (i.e. links between the different variables within a block).
 \begin{itemize}
     \item Each block ${\bf X}_t$ is composed of $D=4$ groups of covariates. The number of covariates in each group is equal to 40.
     \item For $d\in\llbracket1,D-1\rrbracket$, the group $d$ of block $\textbf{X}_t$ is linearly linked to the corresponding group $d$ of the other blocks; the \textbf{inter-block} linear correlation parameter is denoted $\rho_t$.
     \item For a given group $d\in\llbracket1,D-1\rrbracket$ and a given block $t$, the variables are linearly linked through the \textbf{intra-block} linear correlation parameter, denoted $\rho_d$.
     \item For a given block $t$, the $D^{th}$ group  is not linked either to the other groups of the block $t$ nor to the groups of the other blocks.
 \end{itemize}
To resume, the $\mathbf{X}_t$'s data sets can be represented as
\[  
\mathbf{X}=\big(\underbrace{
  \smash[b]{\overbrace{\clubsuit \cdots \clubsuit}^{\text{Group 1}}}
  \boldsymbol{\cdots}
  \smash[b]{\overbrace{\spadesuit \cdots \spadesuit}^{\text{Group $D$}}}
}_{\mathbf{X}_1} 
\underbrace{\smash[b]{\bullet\cdots\bullet}}_{\mathbf{X}_{\text{2 to T-1}}}
\underbrace{
  \smash[b]{\overbrace{\varclubsuit \cdots \varclubsuit}^{\text{Group 1}}}
  \cdots
  \smash[b]{\overbrace{\varspadesuit \cdots \varspadesuit}^{\text{Group $D$}}}
}_{\mathbf{X}_T}\big),
\]
where the card game symbols represent variables with different links.
To generate the $n$ observations of all the covariates of the $T$ blocks, the multivariate normal distribution has been used with a null vector as mean and a covariance matrix of size $T\times D\times 40 = 1600$respecting the conditions mentioned above. 

The response matrix $\mathbf{Y}$ must be designed with links to the covariates of the blocks $\mathbf{X}_1,\dots,\mathbf{X}_T$.
\begin{itemize}
    \item  The block ${\bf Y}$ is composed of $q=1$ variable since the $\mathcal{L}$asso method, the main prediction benchmark method, works for univariate response.
    \item ${\bf Y}$ is linked to $5$ of the  $10$ blocks. In each of those blocks, only a number of variables denoted $\theta$, randomly chosen in $\Theta=\{4,8,12,16,20,24,28,32,36,40\}$, is indeed taken into account. The $40-\theta$ other variables of that group are filled with Gaussian noises.
    
    This process allows to simulate strongly correlated data sets if $\theta$ is high for each of the $5$ blocks. Inversely, if the different $\theta$'s are small, the first left singular vector tends to describe less common information.
    \item The response variable is then obtained as the first left singular vector of the \textbf{SVD} applied to the matrix containing only the informative covariates and is then naturally linearly linked to those informative covariates.
    \end{itemize}

For generation of the missing values, a random process is used to delete some rows (observations) of some blocks ${\bf X}_t$, corresponding to a proportion of missing values fixed a priori. A constraint has been taken into account: there must be at least one block of non missing values for each individual.

Figure~\ref{fig:corComp} shows the correlation matrix of all the covariates of the  $T=10$ blocks bound together and the right column corresponds to the correlations with the response matrix $\bf Y$, using a data set simulated with  $n=100$,  $\rho_t=0.9$, $\rho_d=0.9$ and $30\%$ of missing values. Note that the calculation of the correlations is based only on the  non missing values.
\begin{figure}[p]
	\centering
	\includegraphics[width = 3in]{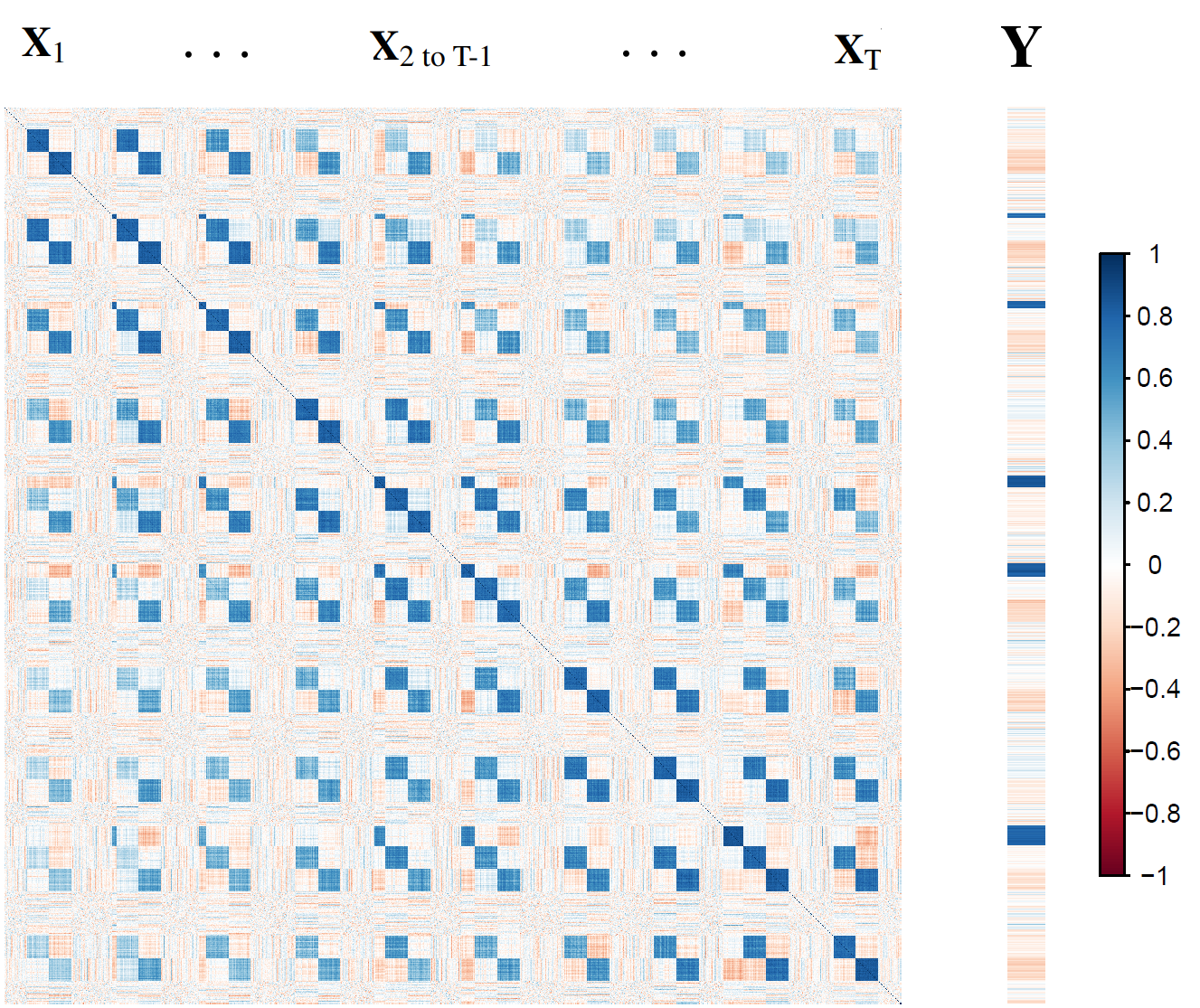}
	\caption{Example of a empirical correlation matrix calculated on a data set simulated with $n=100$, $T=10$, $\rho_t=0.9$, $\rho_d=0.9$ and $30\%$ of missing values. Correlation scale: from red (-1) to blue (+1)
	\label{fig:corComp}}
\end{figure}

In the simulation study of Section~\ref{result-sim}, various values for the parameters $n$, $\rho_t$, $\rho_d$ and the proportion of missing values were considered.

\subsection{Competing Methods}\label{compet}

\textbf{mdd-sPLS} (with {\bf Koh-Lanta} algorithm) was compared to several competing methods. 
Since existing approaches are mostly ``two-steps methods'', it is therefore necessary to choose \textbf{imputation} method and \textbf{prediction} method.

The \texttt{imputation} methods selected for this simulation study were: 
\begin{itemize}[topsep=0pt,itemsep=-1ex]
\item \textbf{mean}: this is the simplest way to impute. For a given covariate in a given block, the missing values are estimated with the mean of the non missing values of this covariate.
\item \textbf{softImpute}, see~\citet{hastie2015matrix}: the imputation is based on the use of fast-ALS dedicated to estimate missing values in a single-block context. Hence, the block structure of the data set is ignored.
\item \textbf{imputeMFA}, from the package \texttt{missMDA}, see~\citet{husson2013handling}: the underlying algorithm takes into account the block structure.
\item \textbf{nipals}, suggested by \textbf{mixOmics} authors among other sources: this imputation step is followed by a \textbf{classic-sPLS}. The protocol suggested by the authors is detailed on~\url{http://mixomics.org/methods/missing-values/}.
\end{itemize}
Note that the last three imputation methods look for a \textbf{SVD}-modified component-wise structure of the data,  as in the proposed \textbf{mdd-sPLS} (with {\bf Koh-Lanta} algorithm). However, those \textbf{imputation} baseline methods are not supervised and so the number of axes must be tuned by the user.

For the \textbf{prediction} step, three methods were selected:
\begin{itemize}[topsep=0pt,itemsep=-1ex]
\item \textbf{mdd-sPLS}: the proposed method applied to the imputed data set (i.e. without missing values).
\item $\mathcal{L}$\textbf{asso}: the well-known $\mathcal{L}_1$-penalized prediction method which is easily usable if the response variable is univariate.
\item \textbf{classic-sPLS}: as previously mentioned, this approach is used  once the \textbf{nipals} algorithm has been used to impute missing values.
\end{itemize}
From these different methods of imputation and prediction, we will compare the numerical behavior of the following 8 methodologies:
\begin{enumerate}[topsep=0pt,itemsep=-1ex]
\item \textbf{mdd-sPLS} with {\bf Koh-Lanta} algorithm,
\item \textbf{nipals} + \textbf{classic-sPLS},
\item \textbf{imputeMFA} + \textbf{mdd-sPLS},
\item \textbf{imputeMFA} + $\mathcal{L}$\textbf{asso},
\item \textbf{softImpute} + \textbf{mdd-sPLS},
\item \textbf{softImpute} + $\mathcal{L}$\textbf{asso},
\item \textbf{mean} + \textbf{mdd-sPLS},
\item \textbf{mean} + $\mathcal{L}$\textbf{asso}.
\end{enumerate}

\noindent
In order to properly evaluate the performance of the different methodologies, a learning (train) sample and a test sample should be considered.
The given data set is then splitted into a \textbf{train} data set, ($\mathbf{X^{(train)}}$,$\mathbf{Y^{(train)}}$), and  a \textbf{test} data set, ($\mathbf{X^{(test)}}$,$\mathbf{Y^{(test)}}$). Let us discuss the strategy of imputation of the \textbf{train} data set and the \textbf{test} data set for the considered methodologies.

\begin{itemize}[topsep=0pt,itemsep=-1ex]
\item The \textbf{Koh-Lanta} algorithm allows to deal with missing values in the \textbf{train} and \textbf{test} data sets.

\item \textbf{mean}: the missing values have been estimated as the mean of the $\mathbf{X^{(train)}}$ variables. They are used to impute $\mathbf{X^{(train)}}$ and $\mathbf{X^{(test)}}$ data sets.
    \item \textbf{imputeMFA}: the underlying method is used to impute $\mathbf{X^{(train)}}$, but cannot be applied to $\mathbf{X^{(test)}}$ imputation. Thus the missing values of $\mathbf{X^{(test)}}$ was imputed to the means, estimated from $\mathbf{X^{(train)}}$.
    
\item \textbf{softImpute}: even if the authors consider a mono-block problem, it is possible to build a prediction model of imputation (using the  \texttt{softImpute} function), which is used to estimate $\mathbf{X^{(test)}}$ from the imputed $\mathbf{X^{(train)}}$ (using the  \texttt{complete} function). 
Note that, apart from the proposed ``all-in-one'' method (\textbf{mdd-sPLS} with {\bf Koh-Lant} algorithm), \textbf{softImpute} is the only method reusing the eigen-spaces constructed on $\mathbf{X^{(train)}}$ to impute $\mathbf{X^{(train)}}$ and $\mathbf{X^{(test)}}$.

\item \textbf{nipals}: $\mathbf{X^{(train)}}$ is imputed with the \texttt{nipals} function from \textbf{mixOmics} package. The number of components has been arbitrarily fixed to \texttt{ncomp}=3. As for \textbf{missMDA}, there is no particular reason to reuse the eigen-spaces built to impute the $\mathbf{X^{(train)}}$'s missing values to predict the $\mathbf{X^{(test)}}$'s missing values. Thus the  $\mathbf{X^{(test)}}$'s missing values are imputed to the mean of the $\mathbf{X^{(train)}}$ data set. Note that the estimation of the \textbf{classic-sPLS} model is based on the imputed $\mathbf{X^{(train)}}$ and $\mathbf{Y^{(train)}}$.
\end{itemize}

\subsection{Simulation Results}\label{result-sim}

The simulation study splits into five parts in order to evaluate:
\begin{itemize}[topsep=0pt,itemsep=-1ex]
\item the effect of the proportion of missing values,
\item the effect of the number of individuals,
\item the effect of the inter-block correlation structure,
\item the effect of the intra-block correlation structure,
\item the computation time and the convergence (of the underlying algorithm) efficiency.
\end{itemize}
The error considered is the leave-one-out cross-validation \textit{root mean square error}, denoted \textbf{RMSEP}. For the \textbf{mdd-sPLS}, eight different values were tested for $\lambda$ and the one with the lowest \textbf{RMSEP} error is selected. For the $\mathcal{L}$\textbf{asso}, the \texttt{glmnet} package is used to select the \texttt{lambda.1se} regularization coefficient as proposed by the authors when the low sample size is small. 
For \textbf{nipals}, \textbf{softImpute} and \textbf{imputeMFA}, the number of components is fixed to $3$.
Moreover, eight different values were tested for the \textbf{softImpute} parameter and the most accurate was selected. 

The various scenarios considered are inspired by real case problems. For each of the simulation settings, $20$ data sets were generated from the data generating process describes in Section~\ref{sub:str_simu}.
Then the eight methodologies presented in section~\ref{compet} were applied to each of the data sets and the associated \textbf{RMSEP} were calculated.

\subsubsection{Effect of the Proportion of Missing Values}

For the eight methodologies considered, Table~\ref{tab:tab_9_9} provides the \textbf{RMSEP} errors for eight different proportions of missing values from $2\%$, to $60\%$ when the data generating process is based on $\rho_{d}=0.9$, $\rho_t=0.9$ (i.e. strong inter/intra-blocks correlations) with $n=100$. Figure~\ref{fig:visu_error_simu_prop} shows the performances of the methods.
\begin{table}[ht]
\resizebox{\textwidth}{!}{
\centering
\begin{tabular}{|c|c|c|c|c|c|c|c|}
  \cline{2-8}
 \multicolumn{1}{c|}{}&\multicolumn{2}{|c|}{\backslashbox{Method}{Prop. of NA}} & \multirow{2}{*}{2\%} & \multirow{2}{*}{5\%} & \multirow{2}{*}{8\%} & \multirow{2}{*}{10\%}& \multirow{2}{*}{15\%} \\ 
\cline{2-3}
 \multicolumn{1}{c|}{}&Imputation&Prediction&&&&& \\
  \hline
  \hline
\multirow{8}{*}{\rotatebox{90}{{\small RMSEP}}}&\multicolumn{2}{|c|}{mdd-sPLS (\textbf{with} Koh-Lanta)} & 0.243 $\pm$  0.0535 & 0.259 $\pm$  0.0560 & \textbf{0.255 $\pm$  0.0588} & \textbf{0.260 $\pm$  0.0582} & \textbf{0.282 $\pm$ 0.0610}\\ 
  \cline{2-8}
&  nipals & classic-sPLS & 0.251 $\pm$  0.0514 & 0.282 $\pm$  0.0545 & 0.296 $\pm$  0.0523 & 0.313 $\pm$  0.0510 & 0.366 $\pm$  0.0540\\ 
  \cline{2-8}
&  imputeMFA & mdd-sPLS & 0.253 $\pm$  0.0542 & 0.283 $\pm$  0.0555 & 0.291 $\pm$  0.0567 & 0.307 $\pm$  0.0549 & 0.347 $\pm$  0.0553\\ 
  \cline{2-8}
&  imputeMFA & Lasso & 0.218 $\pm$  0.0460 & 0.259 $\pm$  0.0495 & 0.269 $\pm$  0.0425 & 0.292 $\pm$  0.0442 & 0.335 $\pm$  0.0484\\ 
  \cline{2-8}
&  softImpute & mdd-sPLS & 0.251 $\pm$  0.0531 & 0.281 $\pm$  0.0547 & 0.290 $\pm$  0.0550 & 0.304 $\pm$  0.0531 & 0.347 $\pm$  0.0554\\ 
  \cline{2-8}
&  softImpute & Lasso & \textbf{0.215 $\pm$  0.0445} & \textbf{0.255 $\pm$  0.0471} & 0.267 $\pm$  0.0403 & 0.289 $\pm$  0.0431 & 0.332 $\pm$  0.0465\\ 
  \cline{2-8}
&  Mean & mdd-sPLS & 0.253 $\pm$  0.0541 & 0.284 $\pm$  0.0553 & 0.292 $\pm$  0.0566 & 0.308 $\pm$  0.0551 & 0.348 $\pm$  0.0557\\ 
  \cline{2-7}
&  Mean & Lasso & 0.219 $\pm$  0.0455 & 0.260 $\pm$  0.0495 & 0.271 $\pm$  0.0430 & 0.293 $\pm$  0.0437 & 0.337 $\pm$  0.0477\\ 
   \hline
  \multicolumn{6}{c}{} \\
    \cline{2-8}
 \multicolumn{1}{c|}{}&\multicolumn{2}{|c|}{\backslashbox{Method}{Prop. of NA}}& \multirow{2}{*}{20\%} & \multirow{2}{*}{30\%} & \multirow{2}{*}{40\%} & \multirow{2}{*}{50\%} & \multirow{2}{*}{60\%}\\
\cline{2-3}
 \multicolumn{1}{c|}{}&Imputation&Prediction&&&&& \\
  \hline
  \hline
\multirow{8}{*}{\rotatebox{90}{{\small RMSEP}}}&\multicolumn{2}{|c|}{mdd-sPLS (\textbf{with} Koh-Lanta)} &\textbf{ 0.300 $\pm$  0.0625 }&\textbf{ 0.315 $\pm$  0.0488 }&\textbf{ 0.362 $\pm$  0.0598} &\textbf{ 0.413 $\pm$  0.0555} & \textbf{0.519 $\pm$  0.0639} \\ 
  \cline{2-8}
&  nipals & classic-sPLS & 0.407 $\pm$  0.0475 & 0.475 $\pm$  0.0439 & 0.561 $\pm$  0.0474 & 0.639 $\pm$  0.0412 & 0.747 $\pm$  0.0418\\ 
  \cline{2-8}
&  imputeMFA & mdd-sPLS & 0.380 $\pm$  0.0516 & 0.426 $\pm$  0.0480 & 0.488 $\pm$  0.0536 & 0.544 $\pm$  0.0473 & 0.634 $\pm$  0.0480\\ 
  \cline{2-8}
 & imputeMFA & Lasso & 0.379 $\pm$  0.0525 & 0.437 $\pm$  0.0578 & 0.516 $\pm$  0.0615 & 0.584 $\pm$  0.0618 & 0.688 $\pm$  0.0688 \\ 
  \cline{2-8}
&  softImpute & mdd-sPLS & 0.379 $\pm$  0.0519 & 0.425 $\pm$  0.0475 & 0.487 $\pm$  0.0539 & 0.541 $\pm$  0.0469 & 0.624 $\pm$  0.0471 \\ 
  \cline{2-8}
 & softImpute & Lasso & 0.378 $\pm$  0.0518 & 0.437 $\pm$  0.0556 & 0.514 $\pm$  0.0588 & 0.582 $\pm$  0.0576 & 0.676 $\pm$  0.0638 \\ 
  \cline{2-8}
 & Mean & mdd-sPLS & 0.381 $\pm$  0.0523 & 0.426 $\pm$  0.0479 & 0.489 $\pm$  0.0539 & 0.544 $\pm$  0.0465 & 0.628 $\pm$  0.0473 \\ 
  \cline{2-8}
 & Mean & Lasso & 0.382 $\pm$  0.0524 & 0.441 $\pm$  0.0548 & 0.517 $\pm$  0.0596 & 0.584 $\pm$  0.0579 & 0.679 $\pm$  0.0643\\  
  \hline
\end{tabular}
}
\caption{Effect of proportion of missing values on the {RMSEP}. $100$ simulation results for ($\rho_{d}=0.9,\rho_t=0.9$) and $n=100$ individuals. The main statistics are given for each method with {(mean $\pm$ std)}. Bolded results correspond to best results for a given proportion of missing values.\label{tab:tab_9_9}}
\end{table}

\begin{figure}[p]
	\centering
	\includegraphics[width = \textwidth]{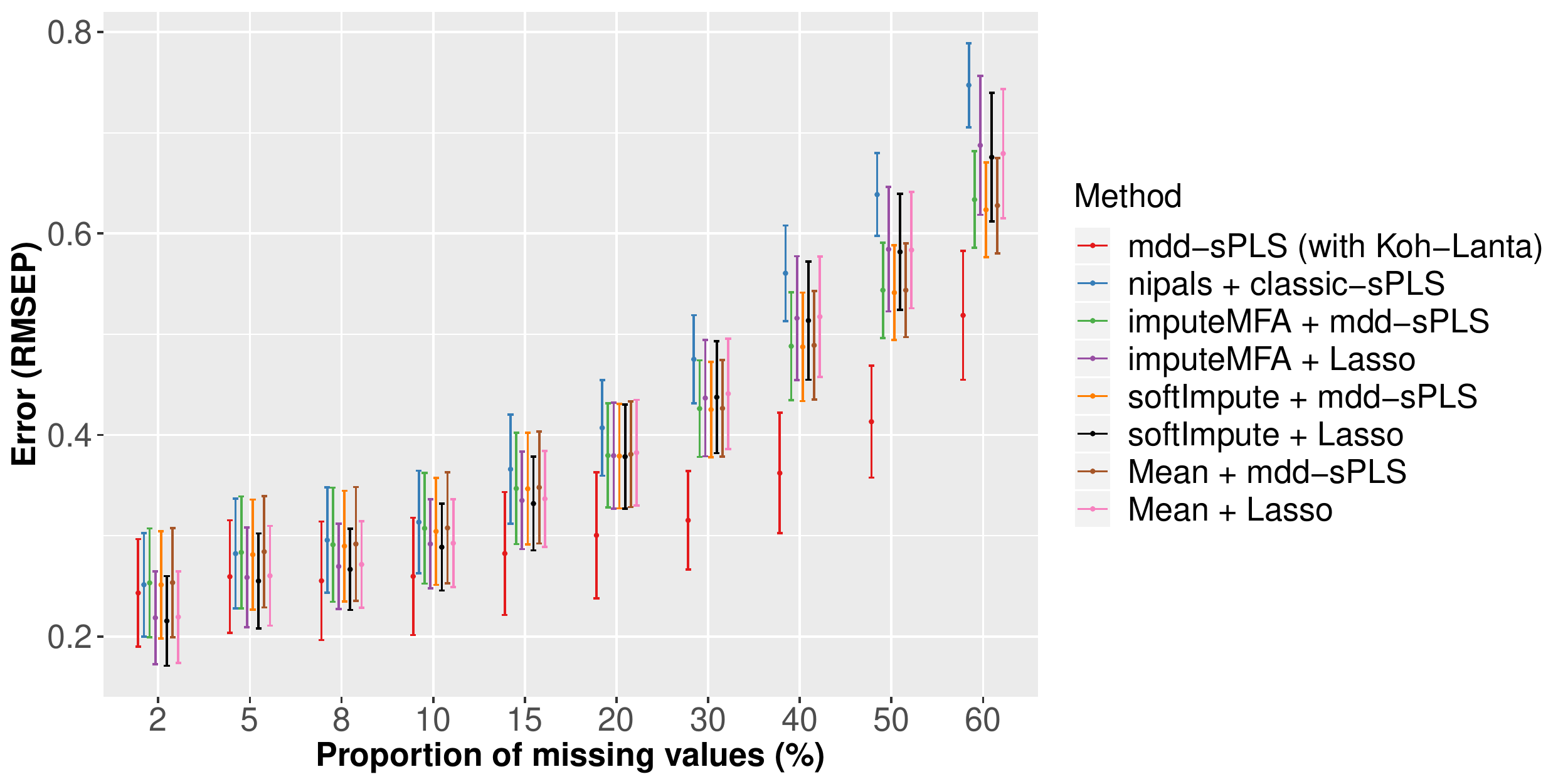}
	\caption{Effect of proportion of missing values on the {RMSEP}, barplot of Table~\ref{tab:tab_9_9} results.
    \label{fig:visu_error_simu_prop}}
\end{figure}

For small proportions of missing values, all the methodologies provide very similar results with a very slight advantage to the \textbf{$\mathcal{L}$asso} based regression methods.
When the proportion of missing values increases, two methods behaved differently compared to the others (Figure~\ref{fig:visu_error_simu_prop}). From 20\% of missing values onward,  The \textbf{mdd-sPLS} with Koh-Lanta gave clearly better results than the others methods. When the proportion of missing values is at 50\% or more, \textbf{nipals + classic-sPLS} methods provides poorer results compared to all the other approaches.
\subsubsection{Effect of the Number \textit{n} of Individuals}

For the eight methodologies considered, Table~\ref{tab:high_dim} provides the \textbf{RMSEP} errors for  different numbers $n$ of individuals ($n=100, 50, 20$) when the data generating process is based on $\rho_{d}=0.9$, $\rho_t=0.9$ (i.e. strong inter/intra-blocks correlations) with a proportion of missing values equal to 30\%.
Table~\ref{tab:high_dim} provides \textbf{RMSEP} error results for these  different numbers $n$ of individuals. 

\begin{table}[ht]
\resizebox{\textwidth}{!}{
\centering
\begin{tabular}{|c|c|c|c|c|c|}
  \cline{2-6}
 \multicolumn{1}{c|}{}&\multicolumn{2}{|c|}{\backslashbox{Method}{$\#$ individuals}} & \multirow{2}{*}{100} & \multirow{2}{*}{50} & \multirow{2}{*}{20} \\ 
\cline{2-3}
\multicolumn{1}{c|}{}&Imputation&Prediction&&&\\
  \hline
  \hline
\multirow{8}{*}{\rotatebox{90}{{\small RMSEP}}}&\multicolumn{2}{|c|}{mdd-sPLS (\textbf{with} Koh-Lanta)} & \textbf{0.308 $\pm$ 0.0528} & \textbf{0.335 $\pm$ 0.0643} & \textbf{0.445 $\pm$ 0.139} \\  \cline{2-6}
 & nipals & classic-sPLS & 0.491 $\pm$ 0.0504 & 0.497 $\pm$ 0.0665 & 0.566 $\pm$ 0.124 \\  \cline{2-6}
 & imputeMFA & mdd-sPLS & 0.420 $\pm$ 0.0530 & 0.440 $\pm$ 0.0648 & 0.521 $\pm$ 0.119 \\  \cline{2-6}
 & imputeMFA & Lasso & 0.438 $\pm$ 0.0599 & 0.506 $\pm$ 0.0852 & 0.714 $\pm$ 0.169 \\  \cline{2-6}
 & softImpute & mdd-sPLS & 0.420 $\pm$ 0.0525 & 0.436 $\pm$ 0.0644 & 0.505 $\pm$ 0.119 \\  \cline{2-6}
 & softImpute & Lasso & 0.436 $\pm$ 0.0581 & 0.496 $\pm$ 0.0824 & 0.656 $\pm$ 0.162 \\  \cline{2-6}
 & Mean & mdd-sPLS & 0.421 $\pm$ 0.0527 & 0.440 $\pm$ 0.0666 & 0.528 $\pm$ 0.125 \\  \cline{2-6}
 & Mean & Lasso & 0.440 $\pm$ 0.0593 & 0.504 $\pm$ 0.0866 & 0.721 $\pm$ 0.188 \\ 
   \hline
\end{tabular}
}
\caption{Effect of sample size  on the {RMSEP}. $100$ simulation results for ($\rho_{d}=0.9,\rho_t=0.9$) and $30\%$ of missing values. The main statistics are given for each method with {(mean $\pm$ std)}. Bolded results correspond to best results for a given number of individuals per sample.\label{tab:high_dim}}
\end{table}

The \textbf{mdd-sPLS} with {\bf Koh-Lanta}  leads to the smallest \textbf{RMSEP} error for the three different sample size $n$. The other \textbf{mdd-sPLS}-based methods have better behavior than the \textbf{$\mathcal{L}$asso}-based methods regardless of the imputation method chosen. Finally, the ``two-steps method'' \textbf{softImpute + mdd-sPLS} has the second best performance.

\subsubsection{Effect of the Inter-Block Correlations}

As presented before, the response variable in $\bf Y$ was correlated to some covariates of some $X_t$'s blocks with the intensity $\rho_t$. Moreover the correlation between different $\bf X_t$'s blocks was also equal to $\rho_t$. 
The performances of the methods were evaluated according to the parameter  $\rho_t$ with $\rho_t\in \{0.9, 0.7, 0.5, 0.3\}$. Simulation results are provided in Table~\ref{tab:tab_no_inter} and are plotted in Figure~\ref{fig:vary_rho_t}.
\begin{table}[ht]
\resizebox{\textwidth}{!}{
\centering
\begin{tabular}{|c|c|c|c|c|c|c|}
  \cline{2-7}
\multicolumn{1}{c}{}&\multicolumn{2}{|c|}{\backslashbox{Method}{$\rho_{t}$}} & \multirow{2}{*}{0.9} & \multirow{2}{*}{0.7} & \multirow{2}{*}{0.5} &\multirow{2}{*}{0.3} \\ 
\cline{2-3}
\multicolumn{1}{c|}{}&Imputation&Prediction&&&&\\
  \hline
  \hline
\multirow{8}{*}{\rotatebox{90}{{\small RMSEP}}}&\multicolumn{2}{|c|}{mdd-sPLS {\bf with} Koh-Lanta} & \textbf{0.312 $\pm$ 0.0516} & \textbf{0.528 $\pm$ 0.0801} & \textbf{0.662 $\pm$ 0.0948} & \textbf{0.752 $\pm$ 0.0896} \\    \cline{2-7}
 & nipals & classic-sPLS & 0.470 $\pm$ 0.0451 & 0.602 $\pm$ 0.0662 & 0.699 $\pm$ 0.0779 & 0.766 $\pm$ 0.0751\\   \cline{2-7}
&  imputeMFA & mdd-sPLS & 0.421 $\pm$ 0.0487 & 0.572 $\pm$ 0.0710 & 0.678 $\pm$ 0.0848 & 0.756 $\pm$ 0.0822\\   \cline{2-7}
&  imputeMFA & Lasso & 0.438 $\pm$ 0.0537 & 0.598 $\pm$ 0.0740 & 0.724 $\pm$ 0.103 & 0.816 $\pm$ 0.0975\\   \cline{2-7}
 & softImpute & mdd-sPLS & 0.420 $\pm$ 0.0492 & 0.570 $\pm$ 0.0706 & 0.677 $\pm$ 0.0853 & 0.754 $\pm$ 0.0824\\    \cline{2-7}
 & softImpute & Lasso & 0.433 $\pm$ 0.0533 & 0.591 $\pm$ 0.0723 & 0.718 $\pm$ 0.102 & 0.813 $\pm$ 0.100\\   \cline{2-7}
&  Mean & mdd-sPLS & 0.421 $\pm$ 0.0492 & 0.572 $\pm$ 0.0713 & 0.679 $\pm$ 0.0858 & 0.756 $\pm$ 0.0827\\  \cline{2-7}
 & Mean & Lasso & 0.436 $\pm$ 0.0538 & 0.598 $\pm$ 0.0743 & 0.724 $\pm$ 0.105 & 0.818 $\pm$ 0.0996\\ 
   \hline
\end{tabular}
}
\caption{Effect of inter-block correlation on the {RMSEP}. $100$ simulation results for $\rho_{d}=0.9$, $\rho_t\in \{0.3,0.5,0.7,0.9\}$, $n=100$ individuals  and $30\%$ of missing values. The main statistics are given for each methodology with {(mean $\pm$ std)}. Bolded results correspond to the best ones for a given value of~$\rho_t$.
\label{tab:tab_no_inter}}
\end{table}

For any $\rho_t$, the \textbf{mdd-sPLS with Koh-Lanta} method is the most accurate one according to the RMSEP. For low values of  $\rho_t$, the \textbf{mdd-sPLS}-based methods showed better behaviors than the \textbf{$\mathcal{L}$asso}-based ones.

\subsubsection{Effect of the Intra-Block Correlations}

The following simulations evaluate the impact of a varying intra-correlation on the overall error {\bf RMSEP}. The other parameters have been fixed to $\rho_t=0.9$ and $30\%$ of missing values. Simulations results are provided in Table~\ref{tab:tab_move_intra} and are plotted in Figure~\ref{fig:vary_rho_d}.
\begin{table}[ht]
\resizebox{\textwidth}{!}{
\centering
\begin{tabular}{|c|c|c|c|c|c|c|}
  \cline{2-7}
\multicolumn{1}{c}{}&\multicolumn{2}{|c|}{\backslashbox{Method}{$\rho_{d}$}} & \multirow{2}{*}{0.3} & \multirow{2}{*}{0.5} & \multirow{2}{*}{0.7} &\multirow{2}{*}{0.9} \\ 
\cline{2-3}
\multicolumn{1}{c|}{}&Imputation&Prediction&&&&\\
  \hline
  \hline
\multirow{8}{*}{\rotatebox{90}{{\small RMSEP}}}&\multicolumn{2}{|c|}{mdd-sPLS {\bf with} Koh-Lanta} &  \textbf{0.399 $\pm$ 0.054} & \textbf{0.346 $\pm$ 0.0563} & \textbf{0.317 $\pm$ 0.0482} & \textbf{0.312 $\pm$ 0.0516 }\\    \cline{2-7}
 & nipals & classic-sPLS &  0.538 $\pm$ 0.0524 & 0.499 $\pm$ 0.0492 & 0.485 $\pm$ 0.0453 & 0.47 $\pm$ 0.0451  \\   \cline{2-7}
&  imputeMFA & mdd-sPLS & 0.477 $\pm$ 0.0534 & 0.443 $\pm$ 0.0533 & 0.433 $\pm$ 0.0468 & 0.421 $\pm$ 0.0487 \\   \cline{2-7}
&  imputeMFA & Lasso & 0.565 $\pm$ 0.0691 & 0.504 $\pm$ 0.0608 & 0.472 $\pm$ 0.0554 & 0.438 $\pm$ 0.0537 \\  \cline{2-7}
 & softImpute & mdd-sPLS & 0.476 $\pm$ 0.0524 & 0.443 $\pm$ 0.0541 & 0.433 $\pm$ 0.0467 & 0.42 $\pm$ 0.0492  \\  \cline{2-7}
 & softImpute & Lasso & 0.565 $\pm$ 0.0661 & 0.503 $\pm$ 0.0591 & 0.472 $\pm$ 0.0531 & 0.433 $\pm$ 0.0533 \\  \cline{2-7}
&  Mean & mdd-sPLS &  0.476 $\pm$ 0.0527 & 0.443 $\pm$ 0.0541 & 0.434 $\pm$ 0.0469 & 0.421 $\pm$ 0.0492 \\  \cline{2-7}
 & Mean & Lasso & 0.572 $\pm$ 0.0697 & 0.508 $\pm$ 0.0604 & 0.476 $\pm$ 0.0533 & 0.436 $\pm$ 0.0538 \\
   \hline
\end{tabular}
}
\caption{Effect of intra-block correlation on the {RMSEP}. 100 simulation results for $\rho_{t}=0.9$, $\rho_d\in \{0.3,0.5,0.7,0.9\}$, $n=100$ individuals  and $30\%$ of missing values. The main statistics are given for each methodology with {(mean $\pm$ std)}. Bolded results correspond to the best ones for a given value of~$\rho_d$.
\label{tab:tab_move_intra}}
\end{table}

Among the four simulated settings, the case $\rho_d=0.9$ corresponds to an already discussed one, see Figure~\ref{fig:visu_error_simu_prop}. It is interesting to see the stability of the results for those new simulations. The data set with $\rho_d=0.7$ shows that all the method are equivalent ($\approx 0.31$) except \textbf{mdd-sPLS (with Koh-Lanta)} for which the error is lower ($\approx 0.27$). For higher intra-block correlations, among the baseline imputation methods, the \textbf{Lasso} based prediction methods are more efficient than \textbf{mdd-sPLS} ones but \textbf{mdd-sPLS (with Koh-Lanta)} show lowest errors. For lower intra-block correlations, $\rho\in\{0.3,0.5\}$,  among the baseline imputation methods, the \textbf{Lasso} based prediction methods are less efficient than \textbf{mdd-sPLS} but \textbf{mdd-sPLS (with Koh-Lanta)} still lead to better results. It is also interesting to notice that the $\textbf{nipals}$ approach has equivalent results than \textbf{mdd-sPLS} prediction based methods with the baseline imputation methods in all features.

\subsubsection{Computation Time and Convergence Quality}

Regarding the convergence of the various methods, the \textbf{mean} imputation method is not concerned by this numerical aspect since it is based on only one step of imputation. 
For the other methodologies, once imputation stages have no further effects on subspace estimation, we considered that the imputation process has converged. The convergence criterion was defined as the stabilization of estimations in the last estimated subspace with a threshold value set to $10^{-9}$ and the maximum number of iterations to $100$. More precisely, let us specify for each method the concerned matrix:
\begin{itemize}
\item \textbf{mdd-sPLS}: the matrix $\mathbf{T}_{super}\mathbf{V}_{ort}$, which is defined in the algorithm of the method.
\item \textbf{softImpute}: the matrix $U$ of the left-singular vectors~\citep[][Algorithm 2.1]{hastie2015matrix},
\item \textbf{imputeMFA}: the matrix $\mathbf{U}$ of the left-singular vectors~\citep[][Chapter 3.1]{josse2016missmda},
\item \textbf{nipals}: the matrix of the components $\mathbf{t_k}$~\citep[][Algorithm 3c]{wold1983multivariate}. Since the $\mathbf{t}_k$'s are obtained by deflation, the test of convergence is done on each component separately. If one of the components does not converge, we consider that the algorithm did not converge and if all the components converge, then the number of iterations is the mean of the total number of iterations.
\end{itemize}
$100$ simulated data sets have been generated 
with $T=10$ blocks of $p=160$ covariates (with $\rho_{d}=0.9$ and $\rho_t=0.9$), $30\%$ of missing values and $n\in\{100, 50, 20\}$.
For each considered method, results on convergence rate and number of iterations are presented in the first two parts of Table~\ref{tab:time_tab}. The prediction errors have also been computed and are represented in Figure~\ref{fig:vary_n}.
\begin{table}[!ht]
\resizebox{\textwidth}{!}{
\centering
\begin{tabular}{|l|c|c|c|c|c|}
\cline{2-6}
\multicolumn{1}{c}{}&\multicolumn{2}{|c|}{\backslashbox{Method}{$\#$ individuals}}&\multirow{2}{*}{100}&\multirow{2}{*}{50}&\multirow{2}{*}{20}\\
\cline{2-3}
\multicolumn{1}{c|}{}&Imputation&Prediction&&&\\
\hline
\hline
\multirow{5}{*}{\rotatebox{90}{{\small Conv. rate}}}&\multicolumn{2}{|c|}{mdd-sPLS with Koh-Lanta}&\bf100 \% &\bf 100 \% & \bf100   \%  \\
\cline{2-6}
&\multicolumn{2}{|c|}{nipals}&\bf100 \% &\bf 100 \% & \bf100 \%    \\
\cline{2-6}
&\multicolumn{2}{|c|}{imputeMFA}&99.4 \% & 98.8 \% & 99.8   \%  \\
\cline{2-6}
&\multicolumn{2}{|c|}{softImpute}&71.5 \% & 85.9 \% & 92.3   \%  \\
\hline
\hline
\multirow{5}{*}{\rotatebox{90}{{\small$\#$ iterations}}}&\multicolumn{2}{|c|}{mdd-sPLS with Koh-Lanta}&\bf3     $\pm$     0 & \bf3     $\pm$     0 &\bf 3     $\pm$     0     \\
\cline{2-6}
&\multicolumn{2}{|c|}{nipals}&42.6     $\pm$     8.64 & 39.2     $\pm$     8.29 & 48.7     $\pm$     7.71     \\
\cline{2-6}
&\multicolumn{2}{|c|}{imputeMFA}&27.0     $\pm$     10.1& 29.7     $\pm$     11.5 & 31.2     $\pm$     8.91     \\
\cline{2-6}
&\multicolumn{2}{|c|}{softImpute}&71    $\pm$     13.3 & 66.6     $\pm$     12.9 & 72.7     $\pm$      12.5\\
\hline
\hline
\multirow{5}{*}{\rotatebox{90}{{\small Time (s)}}}&\multicolumn{2}{|c|}{mdd-sPLS with Koh-Lanta}&\bf 0.662     $\pm$     0.209 &\bf 0.343    $\pm$     0.0403 &\bf 0.315     $\pm$     0.0550     \\
\cline{2-6}
&nipals & classic-sPLS&33.0     $\pm$     5.27 & 18.  $\pm$      3.77 & 22.1     $\pm$     3.48     \\
\cline{2-6}
&imputeMFA&mdd-sPLS&9.44     $\pm$     3.39 & 3.93     $\pm$     1.45 & 3.12     $\pm$     0.600     \\
\cline{2-6}
&softImpute&mdd-sPLS&2.00 $\pm$ 0.984 & 0.849 $\pm$ 0.155 & 1.14 $\pm$ 0.175       \\
\cline{2-6}
&Mean&mdd-sPLS&\bf 0.0124     $\pm$     0.00215 &\bf 0.00683     $\pm$     0.00069 &\bf 0.00410     $\pm$     0.000469\\
\hline
\end{tabular}
}
\caption{Effect of number of individuals. $100$ simulation results for $T=10$ blocks of $p=160$ covariates (with $\rho_{d}=0.9$ and $\rho_t=0.9$) and for $30\%$ of missing values. The main statistics (over the $100$ simulations) are given for each method with {(mean $\pm$ std)}. That table is divided in three parts. 
The first part (lines 1 to 4) corresponds to the convergence rate for each imputation method. 
The second part (lines 5 to 8) corresponds to the number of iterations for each imputation method, only in case of convergence. The {Mean} imputation method (that works in 1 iteration and thus always converges) is not taken into account in those two parts. 
The third part (lines 9 to 13) corresponds to the computation time for each method. $mean$'s are calculated not only over the $100$ simulated data sets but also over the number of individuals (indicated in the column and having an impact on  leave-one-out computations) in order to ``standardize'' the results to the time to that of creating a single model one model. Bolded results correspond to best results for a given number $n$ of individuals.\label{tab:time_tab}}
\end{table}

\textbf{nipals} and \textbf{mdd-sPLS} with {\bf Koh-Lanta} get $100\%$ convergence. \textbf{imputeMFA} almost always converged while for \textbf{softImpute} almost $30\%$ of imputation processes did not converge when $n=100$.
Concerning the number of iterations, denoted \textit{$\#$ iterations} in Table~\ref{tab:time_tab}, the \textbf{mdd-sPLS}  with {\bf Koh-Lanta} only needs $3$ iterations before converging. \textbf{softImpute} shows a high number of iterations. \textbf{nipals} needed less iterations to converge.  \textbf{imputeMFA} used an average of $\sim 30$ iterations with a large standard deviation relatively to the other methods.

Computations have been performed on \texttt{Intel\textregistered\ Xeon\textregistered\ CPU E5-2690 v2, 3.00GHz} processors. Concerning  the computation time, one notice that the \textbf{Mean} process naturally is the fastest. This intuitive result is followed by the \textbf{mdd-sPLS} with {\bf Koh-Lanta} approach for which the computation time lasts $\sim 0.5$ seconds. On the contrary the \textbf{nipals} method lasts within tens of seconds, \textbf{imputeMFA} is faster but still lasts within seconds (from 3.1 to 9.4 seconds). The \textbf{softImpute} method is faster, less than $2$ seconds.
Not surprisingly, the computation time of almost all methods decreased as the number $n$ of individuals decreased, with the exception of the \textbf{nipals} and {\bf softImpute} algorithms.

\subsubsection{Conclusion from the Simulations}

In comparison with the other competing methods, \textbf{mddsPLS} with {\bf Koh-Lanta} clearly exhibits very good performances in terms of predictive capacities in the context of a large proportion of missing values and small number $n$ of individuals. This is shown in the context of strongly correlated blocks as well as in the context of low inter-block information correlation (small $\rho_t$). Another set of simulations show the robustness of the results for low $\rho_d$ and low $\rho_t$ and is presented in Appendix~\ref{app:sup_simu}.

\section{Real Data Application: the Ebola rVSV-ZEBOV Data Set}
\label{S:5}

The current work was inspired by this real data application.

\subsection{The Data Set}

The application is an early phase vaccine trial evaluating the rVSV-ZEBOV Ebola vaccine already studied by~\citet{rechtienrichertlorenzo}. As many modern early vaccine trials it includes small number of participants (here, $n=18$) with heterogeneous and high dimensional data sets carrying a lot of information through numerous covariates allowing a deep evaluation of the response to the vaccine.

More specifically, for each participant, the gene expression in whole blood by RNA-seq and the cellular functionality by cytometry have been measured at four different days $\in\{0,1,3,7\}$ after vaccination. Genes of interest were pre-selected by removing those with a variance less than $0.2$ leading to $18~301$ genes included in the following analysis. The cellular functionality consisted in the characterization of \textit{Natural killers}, \textit{Dendritic cells} and \textit{Cytokines}, covering a total of 129 variables. 
So, $T=8$ blocks ${\bf X}_t$ of covariates were available, see Table~\ref{tab:tab_recap_vars} for the number of covariates in each ${\bf X}_t$'s blocks. 

Moreover, the antibody responses against the Gueckedou strain by ELISA have been measured at days $\in\{28,56,84,180\}$ after vaccination, so ${\bf Y}\in\mathbb{R}^{18\times 4}$. The aim of the analysis was to find the best predictors of the antibody responses among the gene expression  (transcriptome) and the cellular functionality.

Recall that the \textbf{mdd-sPLS} method works with standardized variables. This standardization step implies that the information contained in the variance is not taken into account for each of the variables.

\begin{table}[ht]
\centering
\begin{tabular}{|l|c|c|c|c|c|c|c|c|}
  \hline
 Type &\multicolumn{4}{c|}{RNA-SEQ} & \multicolumn{4}{c|}{Cellular functionality}\\
    \hline
  Block & ${\bf X}_1$ & ${\bf X}_2$ & ${\bf X}_3$ & ${\bf X}_4$ & ${\bf X}_5$ & ${\bf X}_6$ & ${\bf X}_7$ & ${\bf X}_8$ \\
   \hline
  Day & 0 & 1 & 3 & 7 & 0 & 1 & 3 & 7 \\
  \hline
$\#(variables)$ & 10279 & 10134 & 9082 & 9670 & 129 & 129 & 129 & 129 \\ 
   \hline
\end{tabular}
  \caption{Number of covariates per block ${\bf X}_t$.
    \label{tab:tab_recap_vars}}
\end{table}

Because of the sample quality constraints, gene expression was not available in about 30\% of cases leading to missing values.
For example, Table~\ref{tab:missing_path} shows the absence (in blue) of all the RNA-Seq values for a particular individual (in columns) for a particular day (in rows) depicting around $30\%$ of missing values. The data set used in that table is available on the NCBI repository.


\begin{table}[ht]
\resizebox{\textwidth}{!}{
  \centering
\begin{tabular}{|c!{\vrule width1pt}c|c|c|c|c|c|c|c|c|c|c|c|c|c|c|c|c|c|c!{\vrule width1pt}|}
\hline
Individuals & 7 & 5 & 9 & 1 & 15 & 10 & 14& 4&2&12&17&16&8&18&13&11&3&6\\
\hline
  Day 0: $\mbox{Vec}({\bf X}_1^T)^T$ &   &  \cellcolor{blue} & \cellcolor{blue} & \cellcolor{blue}  &   & &\cellcolor{blue} & & & & & & &\cellcolor{blue} & & & &\\
  \hline
  Day 1: $\mbox{Vec}({\bf X}_2^T)^T$ & \cellcolor{blue}  &  \cellcolor{blue} & \cellcolor{blue} &   &   & & & & & & &\cellcolor{blue} & &\cellcolor{blue} & & & &\\
  \hline
  Day 3: $\mbox{Vec}({\bf X}_3^T)^T$ &   &   & \cellcolor{blue} &   &   & & & & \cellcolor{blue}& & & & \cellcolor{blue}& & & & &\\
  \hline
  Day 7: $\mbox{Vec}({\bf X}_4^T)^T$ &   &   &  &   &   & & & & \cellcolor{blue}& & & & \cellcolor{blue}&\cellcolor{blue} & & & &\\
  \hline
\end{tabular}
}
  \caption{Missing values (in blue) in the Ebola rVSV-ZEBOV RNA-Seq data sets ${\bf X}_t,~t=1,\dots,4$, where the notation Vec stands for the Vec operator. For a given individual and a given day, all the corresponding values are missing.
  \label{tab:missing_path}}
\end{table}

\subsection{Statistical Analysis}

Four {\bf mdd-sPLS}-based methodologies were compared through \textbf{MSEP} (means square error of prediction) calculated by leave-one-out cross-validation:
\begin{itemize}
    \item {\bf mdd-sPLS} with {\bf Koh-Lanta},
    \item two-step approach: imputation to the {\bf mean} + {\bf mdd-sPLS},
    \item two-step approach: imputation with {\bf softImpute} + {\bf mdd-sPLS},
    \item two-step approach: imputation with {\bf imputeMFA} + {\bf mdd-sPLS}.
\end{itemize}
Figure~\ref{fig:error_mean_mixOmics_28_56_84_180_GOOD_compare} focuses on {\bf mdd-sPLS} with {\bf Koh-Lanta} and {\bf mean} + {\bf mdd-sPLS} and shows the number of times each response variable has been selected for every optimal $\lambda$ value. All comparisons are provided in Table~\ref{tab:yab_res}.  
Since the \textbf{softImpute} method uses random initialization and does not converge systematically, a variability appears in the prediction errors, here depicted by the ($mean\pm std$) notation.
All the methods led to the selection of the day 56 response variable in the model, the only variable that was always selected by the four methods. {\bf mdd-sPLS} with {\bf Koh-Lanta} clearly retains two response variables in the model: day 56 and day 84.

\begin{table}[!ht]
\centering
\resizebox{\textwidth}{!}{
\begin{tabular}{|c|c||c|c|c|c|c|c|c|c||c|}
\cline{3-11}
\multicolumn{2}{c|}{}&\multicolumn{9}{c|}{Leave-One-Out prediction error}\\
\hline
\multicolumn{2}{|c||}{Method}& 
\multicolumn{2}{c|}{Day 28} &
\multicolumn{2}{c|}{Day 56} &
\multicolumn{2}{c|}{Day 84} &
\multicolumn{2}{c||}{Day 180} &
Mean
\\
\cline{1-10}
Imputation&Prediction& \textbf{RMSEP} & $\#$ 
& \textbf{RMSEP} & $\#$ &  \textbf{RMSEP} & $\#$ & \textbf{RMSEP} & $\#$ & \textbf{RMSEP}\\
\hline
\hline
\multicolumn{2}{|c||}{mdd-sPLS with Koh-lanta}&\multirow{2}{*}{\bf 1.027}&\multirow{2}{*}{4/18}&\multirow{2}{*}{}&\multirow{2}{*}{18/18}&&\multirow{2}{*}{}&\multirow{2}{*}{\bf 1.029}&\multirow{2}{*}{1/18}&\multirow{2}{*}{\bf 0.9035}\\
\multicolumn{2}{|c||}{$\lambda= 0.8654$}&&&\multirow{-2}{*}{\bf 0.6143}&&\multirow{-2}{*}{\bf 0.9426}&\multirow{-2}{*}{17/18}&&&\\
\hline
\multirow{2}{*}{Mean} &mdd-sPLS&
\multirow{2}{*}{1.028}&\multirow{2}{*}{2/18}&&\multirow{2}{*}{18/18}&&\multirow{2}{*}{}&\multirow{2}{*}{\bf 1.029}&\multirow{2}{*}{0/18}&\multirow{2}{*}{0.9326}\\
&$\lambda= 0.863$&&&\multirow{-2}{*}{0.6312}&&\multirow{-2}{*}{1.041}&\multirow{-2}{*}{6/18}&&&\\
\hline
\multirow{2}{*}{softImpute} &mdd-sPLS& \multirow{2}{*}{1.029$\pm$0}&\multirow{2}{*}{($0\pm0$)/18}&&\multirow{2}{*}{($18\pm0$)/18}&&\multirow{2}{*}{}&\multirow{2}{*}{\bf 1.029$\bf \pm0.0001374$}&\multirow{2}{*}{($0.3\pm0.5$)/18}&\multirow{2}{*}{0.9294}\\
&$\lambda=0.8566667$&&&\multirow{-2}{*}{$0.6326\pm0.03795$}&&\multirow{-2}{*}{$1.027\pm0.002191$}&\multirow{-2}{*}{($4.4\pm0.8$)/18}&&&\\
\hline
\multirow{2}{*}{imputeMFA} &mdd-sPLS& \multirow{2}{*}{ 1.028}&\multirow{2}{*}{3/18}&&\multirow{2}{*}{18/18}&&\multirow{2}{*}{}&\multirow{2}{*}{\bf 1.029}&\multirow{2}{*}{0/18}&\multirow{2}{*}{0.9433}\\
&$\lambda= 0.852222$&&&\multirow{-2}{*}{0.6899}&&\multirow{-2}{*}{ 1.026}&\multirow{-2}{*}{7/18}&&&\\
\hline
\end{tabular}
}
  \caption{Results of the leave-one-out cross-validation prediction errors applied to the rVSV data set. The last column gives the mean error. The \# symbol represents the number of times each variable is selected in the cross-validation process, among $18$ different models built in the cross-validation process (since $n=18$).
    \label{tab:tab_results_loo}}
\end{table}


Table~\ref{tab:yab_res} shows the final model, selected as minimum for day 56, with $\lambda\simeq 0.866$. 
The {\bf mdd-sPLS} with {\bf Koh-Lanta} approach was highly selective as it kept three covariates while the other methods kept $15$ variables ~\cite[see][Figure S5 from supplementary materials]{rechtienrichertlorenzo}. As mentionned before, the selection over the $\bf Y$ part was also efficient and kept 2 response variables in the model: the antibody levels at day 56 and day 84. 

Finally, the selected covariates were biologically meaningful. The three genes (TIFA$_{day \ 1}$, SLC6A9$_{day \ 3}$, FAM129B$_{day \ 3}$) selected through the proposed approach were also selected by~\citet{rechtienrichertlorenzo} as the three top genes in the sensibility analysis realized with bootstrap analysis.
Note that the other three methodologies selected the same three covariates except for the \textbf{softImpute+mdd-sPLS} methods which did not select SLC6A9$_{day \ 3}$, the corresponding results are not provided here.
For each selected covariate in Table~\ref{tab:yab_res}, the absolute value of the product between the corresponding weight and super-weight gives a measure of its impact in the model. For TIFA$_{day \ 1}$, this value was equal to $0.961$, for SLC6A9$_{day \ 3}$ equal to $0.107$ and for FAM129B$_{day \ 3}$ equal to $0.255$. The interpretation was that TIFA$_{day \ 1}$ was the most important covariate while FAM129B$_{day \ 3}$ was the second most important one and SLC6A9$_{day \ 3}$ was the third most important one. Correlations, considering only present samples, between TIFA$_{day \ 1}$ and Gueckedou strain on days $\in \{28,56,84,180\}$ are respectively equal to $0.83$, $0.96$, $0.90$ and $0.81$. 

It is also interesting to interpret the parsimony of the antibody response measurements by selecting day 56 and day 84 and not response measurements at day 28 and day 180. This reflects probably the fact that once established, the antibody response is quite stable over every individual and therefore does not need many repeated measurements to be characterized. 

\begin{figure}[p]
	\centering
\includegraphics[width = 0.8\textwidth] {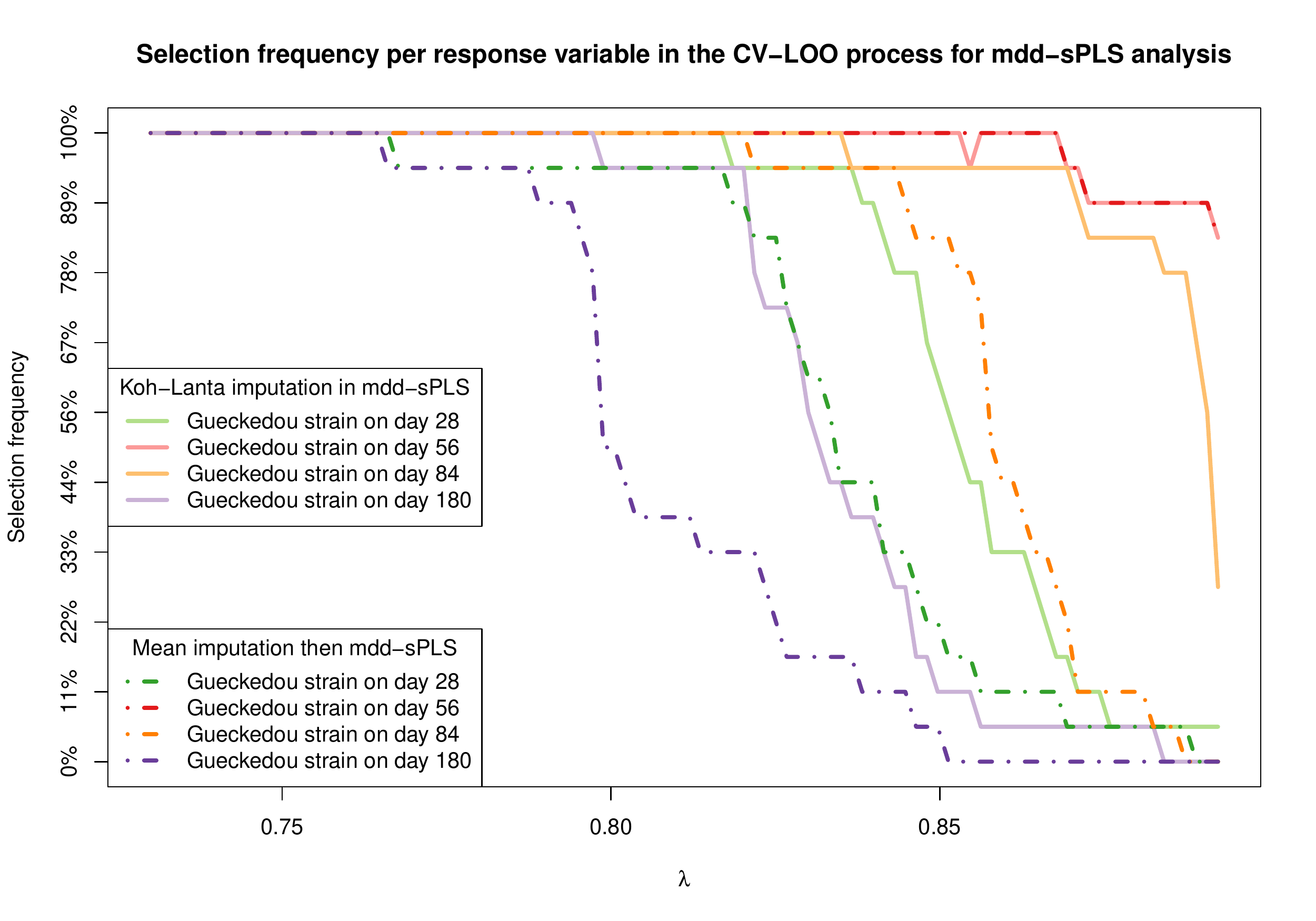}
\caption{Proportion of selection of each $\mathbf{Y}$ variable for both considered methods on the rVSV data set through Leave-One-Out Cross-Validation. Dotted lines show mean-imputation results and bolded lines full {mdd-sPLS} with the {Koh-Lanta} algorithm.}
\label{fig:error_mean_mixOmics_28_56_84_180_GOOD_compare}
\end{figure}

\begin{table}[!ht]
\centering
\begin{tabular}{|c||c|c|c|c|}
\hline
& 
Block&
Variable&
Weights&
Super-weights\\
\hline
\hline
\multirow{3}{*}{X}&Genes on day 1&TIFA&1&-0.961\\
\cline{2-5}
&\multirow{2}{*}{Genes on day 3}&SLC6A9&0.388&\multirow{2}{*}{0.277}\\
\cline{3-4}
&&FAM129B&0.922&\\
\hline
\hline
\multirow{2}{*}{Y}&\multirow{2}{*}{Response}&Gueckedou on day 56&-0.924&\multirow{2}{*}{$\times$}\\
\cline{3-4}
&&Gueckedou on day 84&-0.382&\\
\hline
\end{tabular}
  \caption{Ebola rVSV phase I/II constructed {mdd-sPLS} model for $\lambda=0.8654$. $4$ blocks of gene expression and $4$ blocks of cellular functionality, one for each of the days $\{0,1,3,7\}$, have been introduced but only days $1$ and $3$ blocks of gene expression have been selected. Also, in the response block, only days $56$ and $84$ have been selected. In columns are represented the weights which denote $u_t^{(1)}$ for the $\mathbf{X}$ blocks and $v^{(1)}$ for the $\mathbf{Y}$ block and also the super-weights for the $\mathbf{X}$ blocks and denoted by $\beta_t^{(1)}$. Only one dimension was found interesting here.
    \label{tab:yab_res}}
\end{table}

\section{Conclusion}
\label{S:6}

The \textbf{mdd-sPLS} method is a \textbf{SVD}-based method (without iteration process) dedicated to multi-block supervised analysis. The \textbf{Koh-Lanta} algorithm deals with missing values in the \textit{train} sample but also in the \textit{test} sample and is implemented in the \textbf{mdd-sPLS} method. The considered method shows very good performance on simulated data sets and gave relevant results in the real data application. This approach allows to make variable selection and missing values imputation. 
The missing data context is limited to entire rows of missing values for certain blocks and can be generalized to any position of missing values by adjusting missing values thanks to known values through a linear model for example. Most of the results, described in this paper, relate to regression problem but the method can also be applied to classification problem. 

The \textbf{mdd-sPLS} including {\bf Koh-Lanta} algorithm method  has been implemented in:
 \begin{itemize}[topsep=0pt,itemsep=-1ex]
 \item a \textbf{R}-package accessible on the \textbf{CRAN}, \url{https://cran.r-project.org/package=ddsPLS},
 \item a \textbf{python}-package accessible on \textbf{PyPi}, \url{https://pypi.org/project/py_ddspls/}.
 \end{itemize}

\section*{Acknowledgments}
\label{S:7}

The authors would like to thank François Husson, Arthur Tenenhaus and Julie Josse for helppful discussions. Hadrien Lorenzo is supported by a 2016 Inria-Inserm thesis grant \textit{M\'{e}decine Num\'{e}rique} (for \textit{Digital Medicine}).

\appendix
\addcontentsline{toc}{section}{Appendix~\ref{app:scripts}: Training Scripts}

\section{Monotonicity of the Weight Cardinality: a Counter Example}
\label{app:1}

The remaining question is about the potential decreasing of the number of variables selected per component. In other words, is the number of null coefficients of a given component a decreasing function of $\lambda$? 
The answer is no as we will see through the following counterexample.

A sample of $n=100$ individuals is generated  with the following correlation structure between a 9-dimensional covariate  and a two-dimensional response:
\begin{table}[!ht]
$\frac{\mathbf{Y}^T\mathbf{X}}{n-1}=$
\centering
\begin{tabular}{|c|c|c|c|c|c|c|c|c|c|}
  \hline
& $X_{1}$ & $X_{2}$ & $X_{3}$ & $X_{4}$ & $X_{5}$ & $X_{6}$ & $X_{7}$ & $X_{8}$ & $X_{9}$ \\ 
  \hline
 $Y_{1}$ & 1.00 & -0.06 & -0.10 & 0.07 & 0.09 & 0.15 & 0.16 & 0.14 & 0.22 \\ 
 \hline
  $Y_{2}$ & -0.08 & 0.98 & 0.29 & -0.18 & 0.25 & 0.02 & 0.04 & -0.01 & -0.03 \\ 
   \hline
\end{tabular}.
\end{table}

The $1^{st}$ and the $2^{nd}$ $\mathbf{X}$-variables are clearly well correlated respectively with the $1^{st}$ and the $2^{nd}$ $\mathbf{Y}$-variables. Let us denote by $\bf u$, respectively $\bf v$, the first right, respectively left, eigen vector of the soft-thresholded covariance matrix $S_\lambda(\frac{\mathbf{Y}^T\mathbf{X}}{n-1})$ for any positive $\lambda$. Figure~\ref{fig:path_ce} shows the real cardinalities (\textbf{black lines}) and the upper bound cardinalities (\textcolor{red}{red lines}) of $\bf u$ and $\bf v$ weights which correspond the application of Corollary~\ref{coro:1}. Vertical lines (\textcolor{blue}{discontinuous blue lines}) symbolize the vanishing of a coefficient of the current matrix $S_\lambda(\frac{\mathbf{Y}^T\mathbf{X}}{n-1})$, depending on $\lambda$. When $\lambda\in[0.1,0.14]$, $\mathbf{Card}({\bf u})$ increases, this corresponds to an area in which $\mathbf{Card}({\bf u})$ might take the value $9$, in that part all the columns are different from $0$, except for $\lambda=0.14$, where the  variable $X_8$ ``disappears''.
\begin{figure}[p]
	\centering
	\includegraphics[width = 0.8\textwidth] {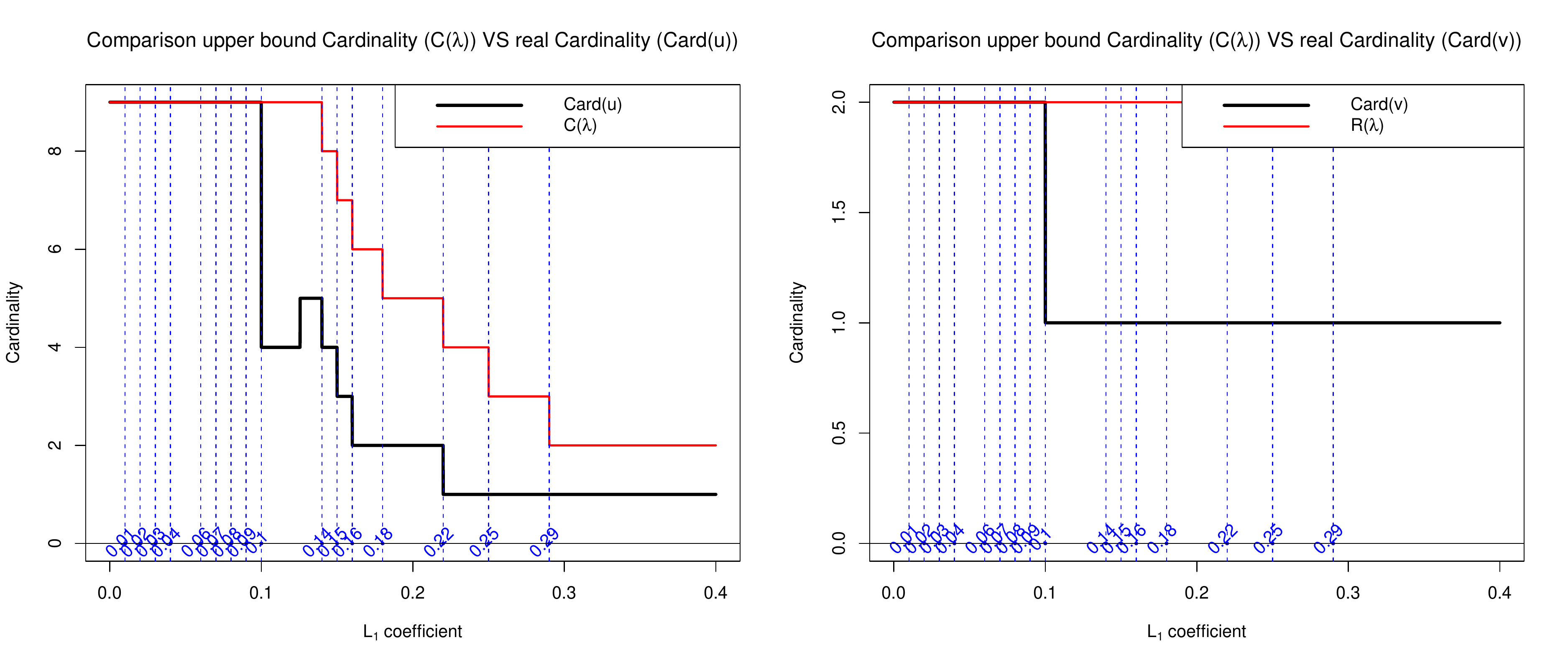}
	\caption{Behaviors of the cardinalities and their upper bounds, defined with corollary~\ref{coro:1}, for a simulation case, first component.
    \label{fig:path_ce}}
\end{figure}
Let us zoom on those particular points:
\[
\begin{split}
u(\lambda=0.12)\approx
\begin{matrix} 
(0 ,0.97 ,0.19,-0.07,0.15,0,0,0, 0)^T
\end{matrix} &\rightarrow \mathbf{Card}({\bf u}(\lambda=0.12))=4\\
u(\lambda=0.13)\approx
\begin{matrix} 
(0.99 ,0 ,0,0,0,0.023,0.034,0.011, 0.10)^T
\end{matrix}& \rightarrow \mathbf{Card}({\bf u}(\lambda=0.13))=5,
\end{split}
\]
This is due to the fact that the order of the components associated with the first two-dimensional eigenspace is defined through the $\mathcal{L}_2$-norm of the components. However, the $\mathcal{L}_1$-shrinkage of the coefficients based on $\lambda$ can change this order since the power of both the first two components are very close to each other, , and only in that case. A way of avoiding this kind of reversal would be to change the soft-thresholding operation with a more $\mathcal{L}_2$-shrinkage flavored operation such as the {\bf SCAD operation}, see for example~\citet{fan2001variable}. But in real cases, the first components are not often sufficiently close in the $\mathcal{L}_2$-norm sense to observe this kind of reversal. Thus, it was decided to keep the soft-thresholding operator in the \textbf{mdds-PLS} method.

\section{Regression Example: the Liver Toxicity Data Set}
\label{app:Reg}

 In the liver toxicity data set~\citep[see][]{heinloth2004gene} $n=64$ male rats of the inbred strain Fisher 334 were exposed to non toxic (50 or 150 mg/kg), moderately toxic (1500 mg/kg) or severely toxic (2000 mg/kg) doses of acetaminophen (paracetamol) in a controlled experiment. The values of $\mathbf{X}\in\mathbb{R}^{64\times 3116}$ are RNA measures and  the values of $\mathbf{Y}\in\mathbb{R}^{64\times 10}$ are  clinical measures of markers for liver injury. 
 There are no missing values. A comparison of \textbf{classic-sPLS} and the proposed \textbf{mdd-sPLS} is given in Figure~\ref{fig:data_liver}.
\begin{figure}[p]
	\centering
	\includegraphics[width = 6in] {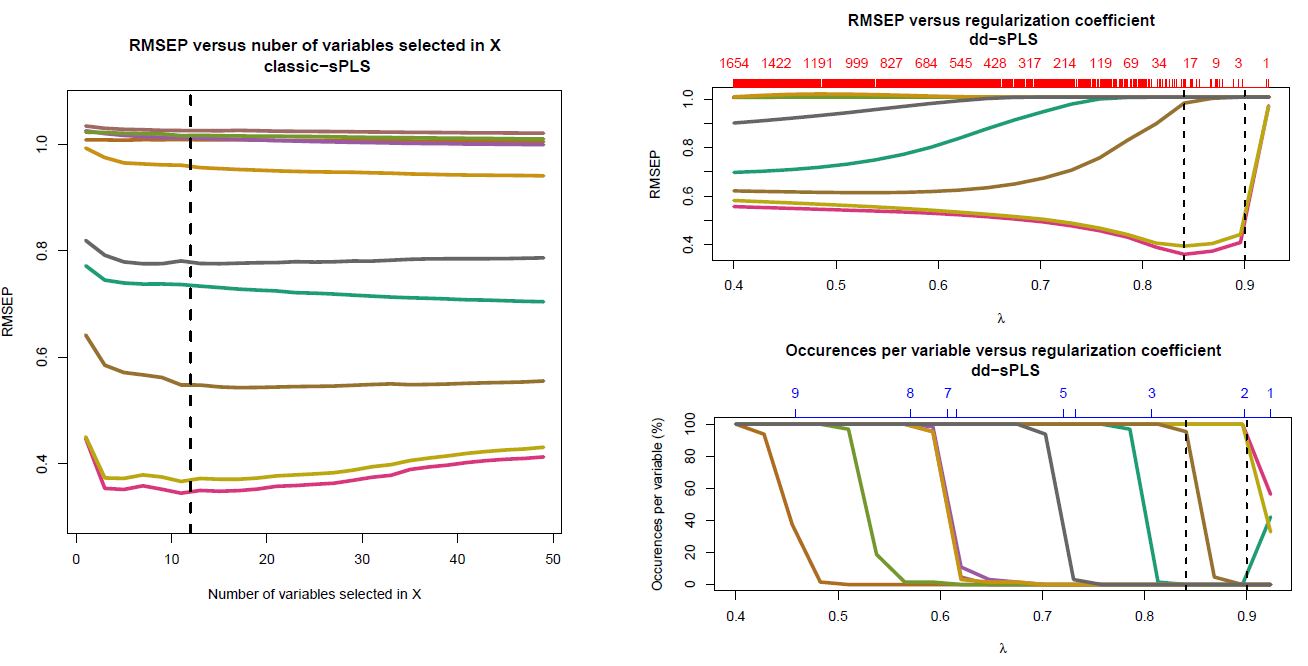}
	\caption{Results of Cross-Validation Leave One Out for both the {\bf classic-sPLS} and {\bf mdd-sPLS} methods for the Liver Toxicity data set.
    \label{fig:data_liver}}
\end{figure}
The first two graphics provide the results of leave-one-out cross-validation for the associated tuning parameters: the number of variables for \textbf{classic-sPLS} and the parameter $\lambda$ for \textbf{mdd-sPLS}. The third graphic gives the number of occurrences of each of the ${\bf Y}$'s response variables versus $\lambda$. These 10 response variables are plotted independently, so one curve represents the behavior of one response variable depending on the model and the regularization parameter chosen. Let us comment the results of  these two approaches.  
\begin{itemize}
\item {\bf classic-sPLS}. According to the suggestion of~\citet{le2008sparse}, the $keep_Y$ parameter has arbitrarily been fixed to 2. With this choice, 8 response variables are removed systematically  from the built models in the cross-validation process. One hope that the same two response variables are almost always selected, but this is clearly not the case. Indeed, if it were, 8 response variables should have a RMSEP value around 1 (the mean-prediction error, since the variables were standardized), which means that the model does not take into account those variables. However, this observation is only valid for only 5 response. This is problematic because the obtained sparse model strongly depends on this arbitrary choice of the $keep_Y$ parameter and thus may provide a wrong model for both selection and prediction.

\item \textbf{mdd-sPLS}.
The graphic of the RMSEP errors versus $\lambda$ clearly shows that 
5 response variables are already estimated to the mean when $\lambda=0.4$ (with corresponding RMSEP errors closed to 1) while the five others are still estimated by the model at this stage. Then, as the regularization parameter $\lambda$ increases, the variability of the RMSEP errors increases since the model has less and less information in the underlying soft-thresholded matrix. By carefully studying this graph, only 2 response variables (the 2 bottom ones) show real learning interest, observable through their decreasing curves while the other curves are increasing. The decreasing part reaches a minimum for $\lambda\approx 0.85$. One can see that this value coincides with the moment when the $3^{rd}$ response variable, in terms of increasing RMSEP ranking, reaches to 1, the symbolic limit of the error. This is equivalent to say that the model doesn't select that variable.

The graphic of the occurrences (per response variable) in the estimated models built in the cross-validation process also reinforces the user's choice to only select two response variables. 

\end{itemize}
\begin{table}[!ht]
\centering
\resizebox{\textwidth}{!}{
\begin{tabular}{|c|c||c|c|c|c|c|c|c|c|c|c|c|c|c|c||c|c|}
\hline
\multicolumn{2}{|c||}{Variable} & {\rotatebox[origin=c]{90}{A\_43\_P14131}} & {\rotatebox[origin=c]{90}{A\_42\_P620915}} & \rotatebox[origin=c]{90}{A\_43\_P11724} & \rotatebox[origin=c]{90}{A\_42\_P802628} & \rotatebox[origin=c]{90}{A\_43\_P10606} & \rotatebox[origin=c]{90}{A\_42\_P675890} & \rotatebox[origin=c]{90}{A\_43\_P23376} & \rotatebox[origin=c]{90}{A\_42\_P758454} & \rotatebox[origin=c]{90}{A\_42\_P578246} & \rotatebox[origin=c]{90}{A\_43\_P17415} & \rotatebox[origin=c]{90}{A\_42\_P610788} & \rotatebox[origin=c]{90}{A\_42\_P840776} & \rotatebox[origin=c]{90}{A\_42\_P705413} & \rotatebox[origin=c]{90}{A\_43\_P22616}&\cellcolor{red!20!white}{\rotatebox[origin=c]{90}{Mean RMSEP(LOO)}} &\cellcolor{red!20!white}{\rotatebox[origin=c]{90}{Min RMSEP(LOO)}} \\
\hline
\rotatebox[origin=c]{90}{classic-sPLS} & \rotatebox[origin=c]{45}{$k_X=12$} & {-0.6} & {-0.52} & 0.17&-0.12&-0.14&-0.18&-0.21&-0.18&-0.14&-0.33&-0.07&-0.26& \multicolumn{2}{c|}{}& \cellcolor{red!20!white}{0.78}&\cellcolor{red!20!white}{0.34}\\
\hline
\multirow{2}{*}{\rotatebox[origin=c]{90}{mdd-sPLS}} & \rotatebox[origin=c]{45}{$\lambda=0.845$} & {-0.6} & {-0.52} &0.17&-0.12&-0.14&-0.18&-0.21&-0.18&-0.14&-0.33&-0.07&-0.26&-0.03&-0.01&\cellcolor{red!20!white}{0.88}&\cellcolor{red!20!white}{0.36}\\\cline{2-18}
& \rotatebox[origin=c]{45}{$\lambda=0.9$} &{-0.86}&{-0.51}&\multicolumn{12}{c|}{}&\cellcolor{red!20!white}{0.89}&\cellcolor{red!20!white}{0.41}\\
\hline
\end{tabular}
}
\caption{Results for  {classic-sPLS} and {mdd-sPLS} methods: selected genes with their corresponding estimated weights.  The two last columns provide the mean and the minimum of the RMSEP errors calculated during the cross-validation process.\label{tab:tabEx}}
\end{table}

Table~\ref{tab:tabEx} provides results of $\bf X$'s weights obtained for \textbf{classic-sPLS} and \textbf{mdd-sPLS} with two choices of $\lambda$. Those models have been retained according to Figure~\ref{fig:data_liver}. 
\textbf{classic-sPLS} based on \textbf{mixOmics} R package selects 12 genes (with an optimal parameter $keep_X$ obtained by cross-validation) and 2 response variables (with the parameter $keep_y$ arbitrarily set to 2  by the user). This approach provides the lower cross-validation leave-one-out errors. 
For the \textbf{mdd-sPLS} method, one can clearly see that the best model, in terms of minimum RMSEP error   is not the sparsest one (with 14 genes selected including the 12 genes selected by \textbf{classic-sPLS}). But, looking at the degree of sparsity in $\mathbf{Y}$ also permits to select $\lambda=0.9$ as a good candidate. For that value of $\lambda$ (very close to the optimal one), the number of genes selected goes from 14 to 2 which is a very good model in terms of sparsity.
For $\lambda=0.9$, \textbf{mdd-sPLS} (with the first component only) provides an excellent selection simultaneously in $\bf X$ and $\bf Y$, with two variables selected in each matrix and a parameter $\lambda$ close to its optimal value in terms of cross-validation leave-one-out errors.

\section{Classification Example: the Penicillium YES Data Set}
\label{app:Classif}

The \textbf{Penicillium YES} data set (available in the \texttt{sparseLDA} package) is a classification data set describing three Penicillium species: melanoconodium, polonicum, and venetum. In this data set of size $n=36$ (with the three balanced groups),  $p=3542$ covariates were extracted from multi-spectral images with 18 spectral bands: $\mathbf{X}\in\mathbb{R}^{36\times 3542}$ and $\mathbf{Y}\in\mathbb{R}^{36\times 3}$ where the three columns of $\bf Y$ are the indicator variables of the groups). More details are available by~\citet{clemmensen2011sparse} where the interest of the \textbf{Sparse Discriminant Analysis} method is highlighted. The \textbf{Sparse Discriminant Analysis} method needed only 2 covariates to perfectly predict the assignment to one of the three groups.
 A leave-one-out cross-validation has been performed for each fold $i=1,\dots,12$, the $i^{th}$ triplet of melanoconodium, polonicum, and venetum, to optimize the  parameter $\lambda$. 
The {\bf mdd-sPLS} method (with $\lambda=0.956$ for example) permits to select 4 different covariates, 2 on each of the two components, with a perfect assignment rate. These two components are plotted in Figure~\ref{fig:Penicillium_res} and the separation of the three groups is clearly visible.
\begin{figure}[p]
	\centering
	\includegraphics[width = 4in] {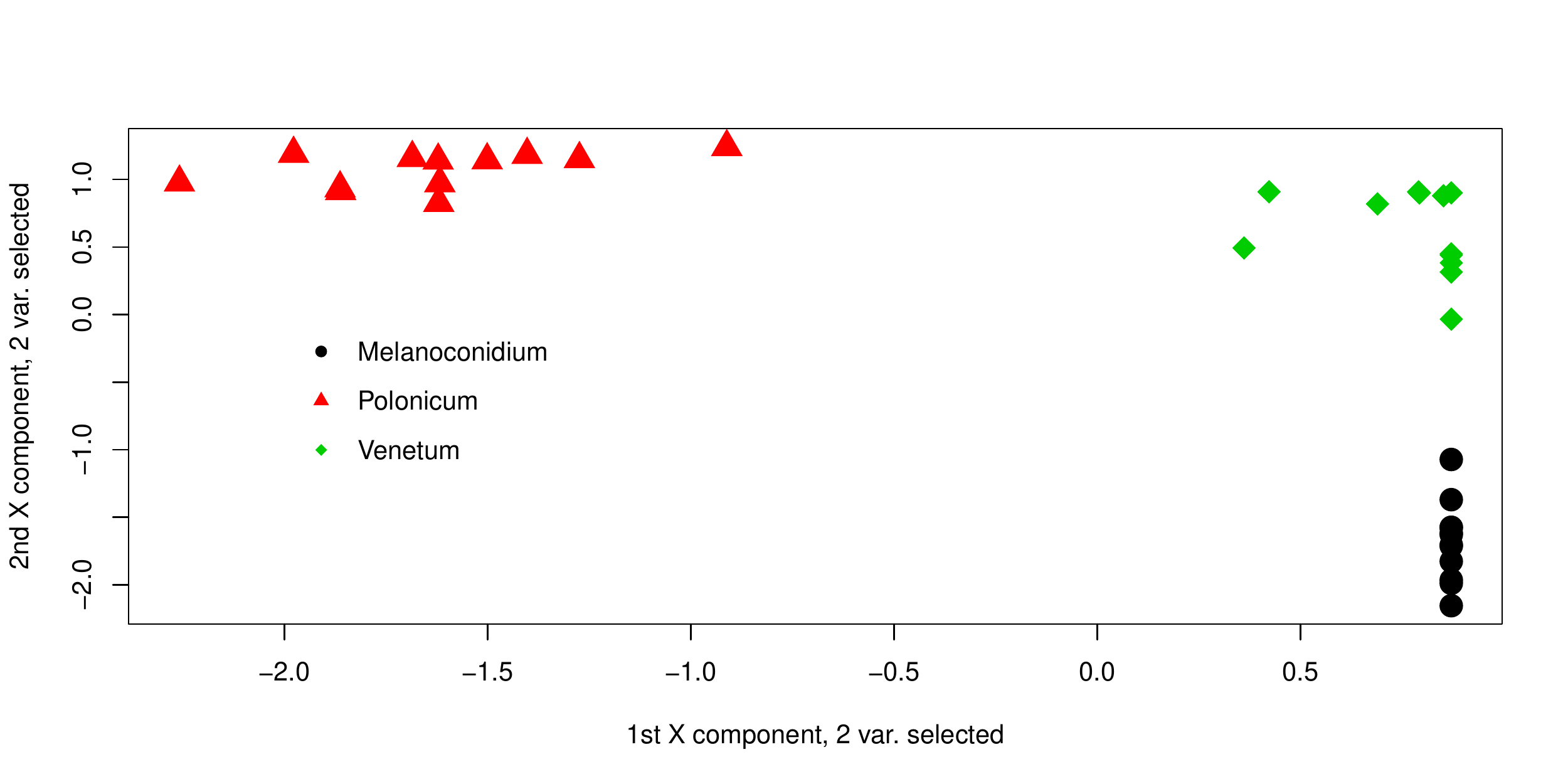}
	\caption{Results of the application of the {\bf mdd-sPLS} to the Penicillium YES data set.\label{fig:Penicillium_res}}
\end{figure}

\section{Effect of Varying Inter-Block Correlation}
\label{app:sup_inter_simu}

The case of varying inter-block correlation has been analysed in that part. The Figure~\ref{fig:vary_rho_t} summarizes the corresponding results. Other parameters have been respectively fixed to $\rho_d=0.9$, $30\%$ of missing values, 100 individuals per simulated data set and 100 simulations per $\rho_t$.

\begin{figure}[p]
	\centering
	\includegraphics[width = \textwidth]{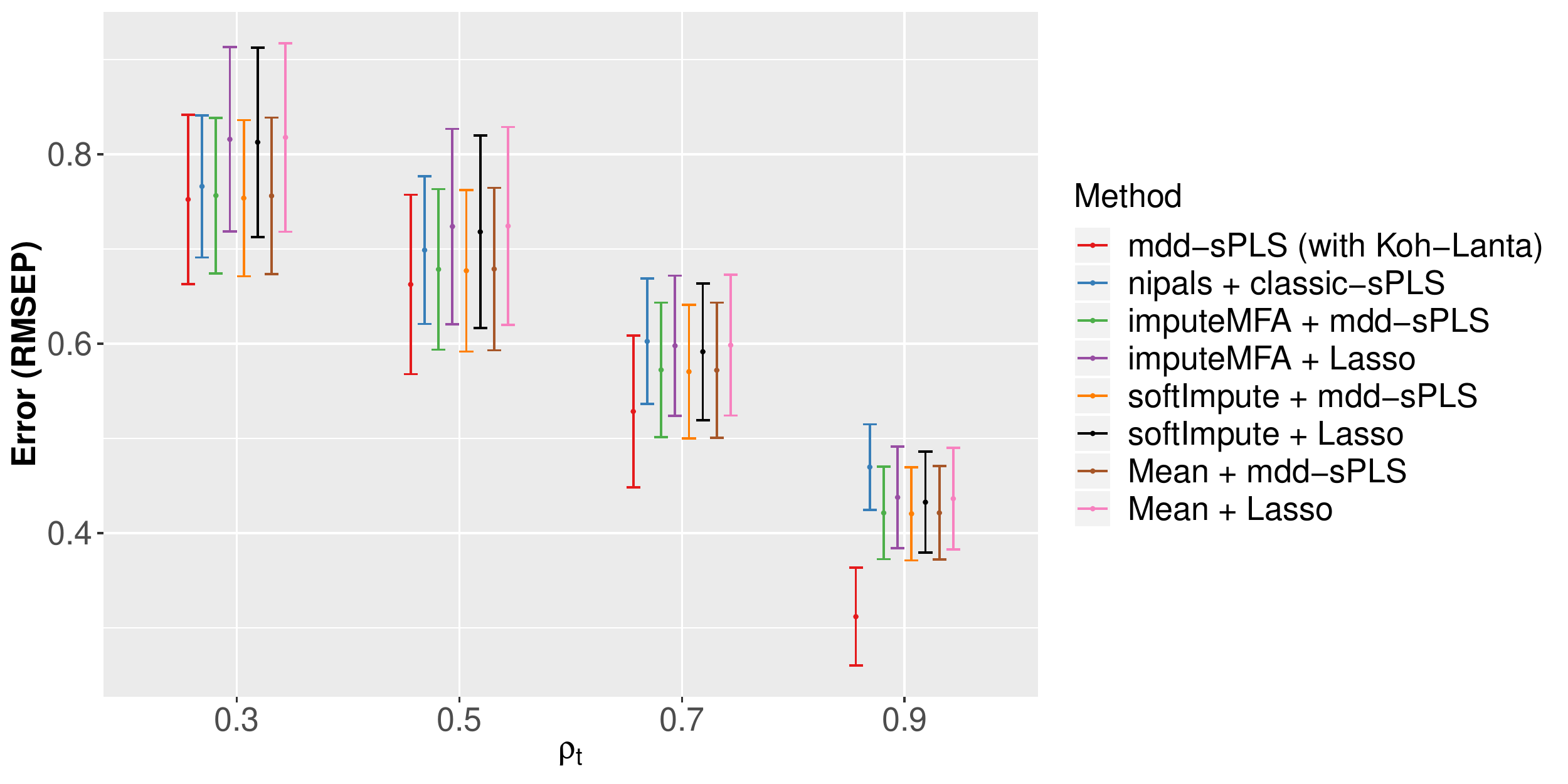}
	\caption{Effect of varying $\rho_t$ on the {RMSEP}, barplot of Table~\ref{tab:tab_no_inter} results. For fixed values of $\rho_d=0.9$, $30\%$ of missing values, 100 individuals per simulated data set and a total of $100$ simulated data sets
    \label{fig:vary_rho_t}}
\end{figure}

In the context of strongly correlated blocks, \textbf{mdd-sPLS (with Koh-Lanta)} seems to have a better behavior than the other methods where \textbf{nipals + classic-sPLS} is slightly less efficient than other methods where \textbf{mdd-sPLS} prediction based methods are lighlty better than \textbf{Lasso} prediction based methods. As the correlation between blocks shrinks, the \textbf{mdd-sPLS} prediction based methods and \textbf{mdd-sPLS (with Koh-Lanta)} show equivalent results, plus, \textbf{nipals + classic-sPLS} is almost as good as those methods. Finally in that low correlation context, the \textbf{Lasso} prediction based methods, are less efficient than other methods.

\section{Effect of Varying Intra-Block Correlation}
\label{app:sup_intra_simu}

The case of varying intra-block correlation has been analysed in that part. The Figure~\ref{fig:vary_rho_d} summarizes the corresponding results. Other parameters have been respectively fixed to $\rho_t=0.9$, $30\%$ of missing values, 100 individuals per simulated data set and 100 simulations per $\rho_d$.

\begin{figure}[p]
	\centering
	\includegraphics[width = \textwidth]{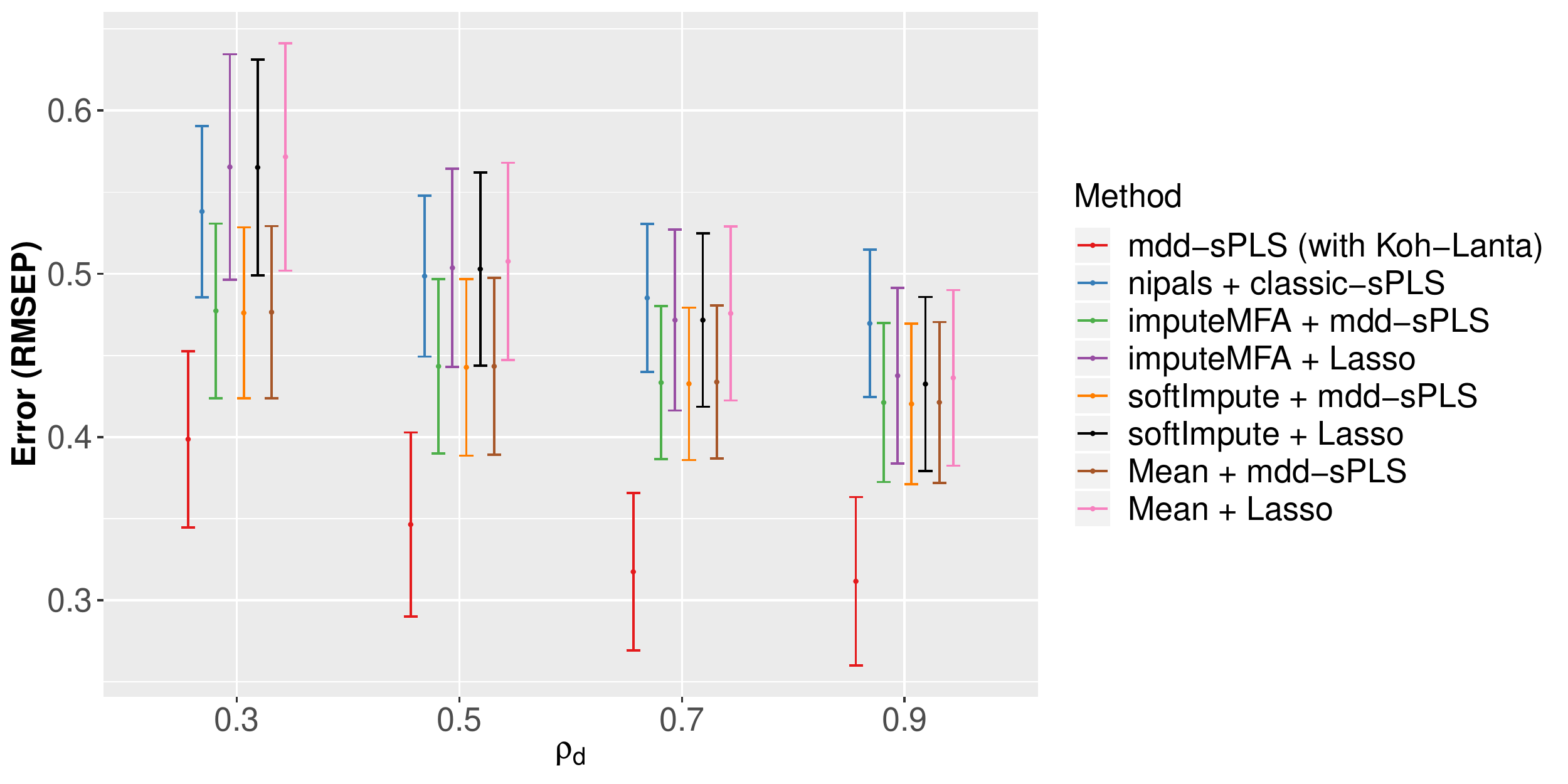}
	\caption{Effect of of varying $\rho_d$ on the {RMSEP}, barplot of Table~\ref{tab:tab_move_intra} results. For fixed values of $\rho_t=0.9$, $30\%$ of missing values, 100 individuals per simulated data set and a total of $100$ simulated data sets
    \label{fig:vary_rho_d}}
\end{figure}

Whatever is the intensity of the intra-block correlation, \textbf{mdd-sPLS (with Koh-Lanta)} shows better results in terms of prediction error. Furthermore the \textbf{mdd-sPLS} prediction based methods are the second ranked methods. The third position is given to the  \textbf{Lasso} prediction based methods for strong intra-block correlations and to \textbf{nipals + classic-sPLS} for low intra-block correlations.

\section{Effect of Varying Number of Individuals}
\label{app:sup_n}

The case of varying number of individuals has been analysed in that part. Figure~\ref{fig:vary_n} summarizes the corresponding results. Other parameters have been respectively fixed to $\rho_t=0.9$, $\rho_d=0.9$, $30\%$ of missing values, 100 individuals per simulated data set and 100 simulations per $n$.

\begin{figure}[p]
	\centering
	\includegraphics[width = \textwidth]{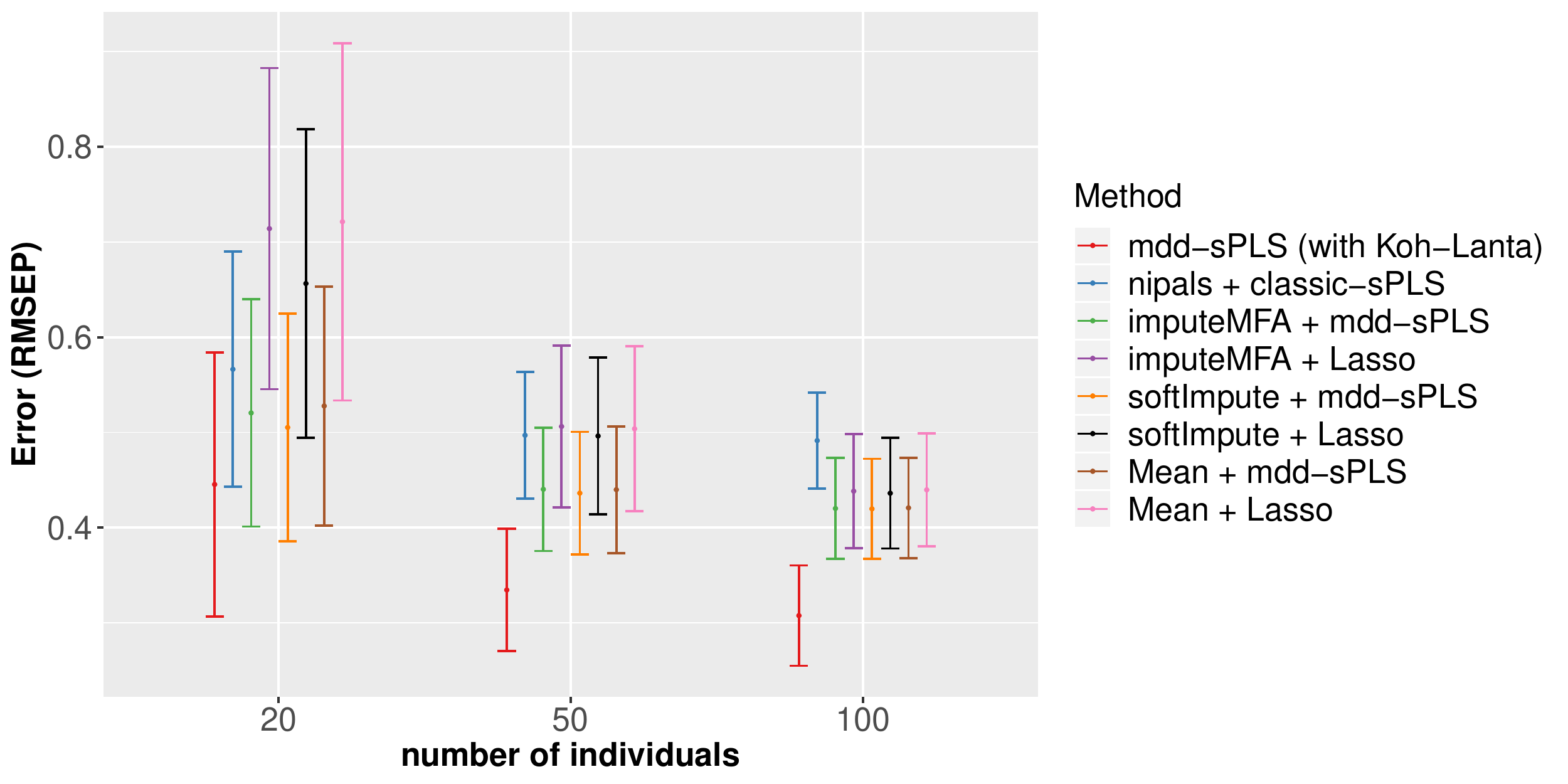}
	\caption{Effect of of varying number of participants on the {RMSEP}, barplot of Table~\ref{tab:time_tab} results. For fixed values of $\rho_t=0.9$, $\rho_d=0.9$, $30\%$ of missing values and a total of $100$ simulated data sets.
    \label{fig:vary_n}}
\end{figure}

Three cases have been considered. For a low number of individuals,  \textbf{mdd-sPLS (with Koh-Lanta)} has the smallest error, the \textbf{mdd-sPLS} prediction based methods and the \textbf{nipals + classic-sPLS} method are slightly less precise and the \textbf{Lasso} prediction based methods are fewer precise. As the number of individuals increase, the \textbf{nipals + classic-sPLS} method seems to be less precise than other methods while \textbf{mdd-sPLS (with Koh-Lanta)} finally gets very better results.

\section{Effect of Low Intra-Block and Inter-Block Simulations}
\label{app:sup_simu}

The following simulations permit to appreciate the validity of the proposed method if there is little information to catch between blocks (low inter-block correlation) and inside blocks (low intra-block correlation).

Here, the proportion of missing values has been fixed to $30\%$ and the number of individuals to $n=100$. The intra-block correlation has been fixed to $\rho_d=0.3$ and the inter-block correlation to $\rho_t=0.5$. Figure~\ref{fig:error_low_info_extreme} shows the results for $100$ simulated data sets.

\begin{figure}[p]
	\centering
	\includegraphics[width = \textwidth] {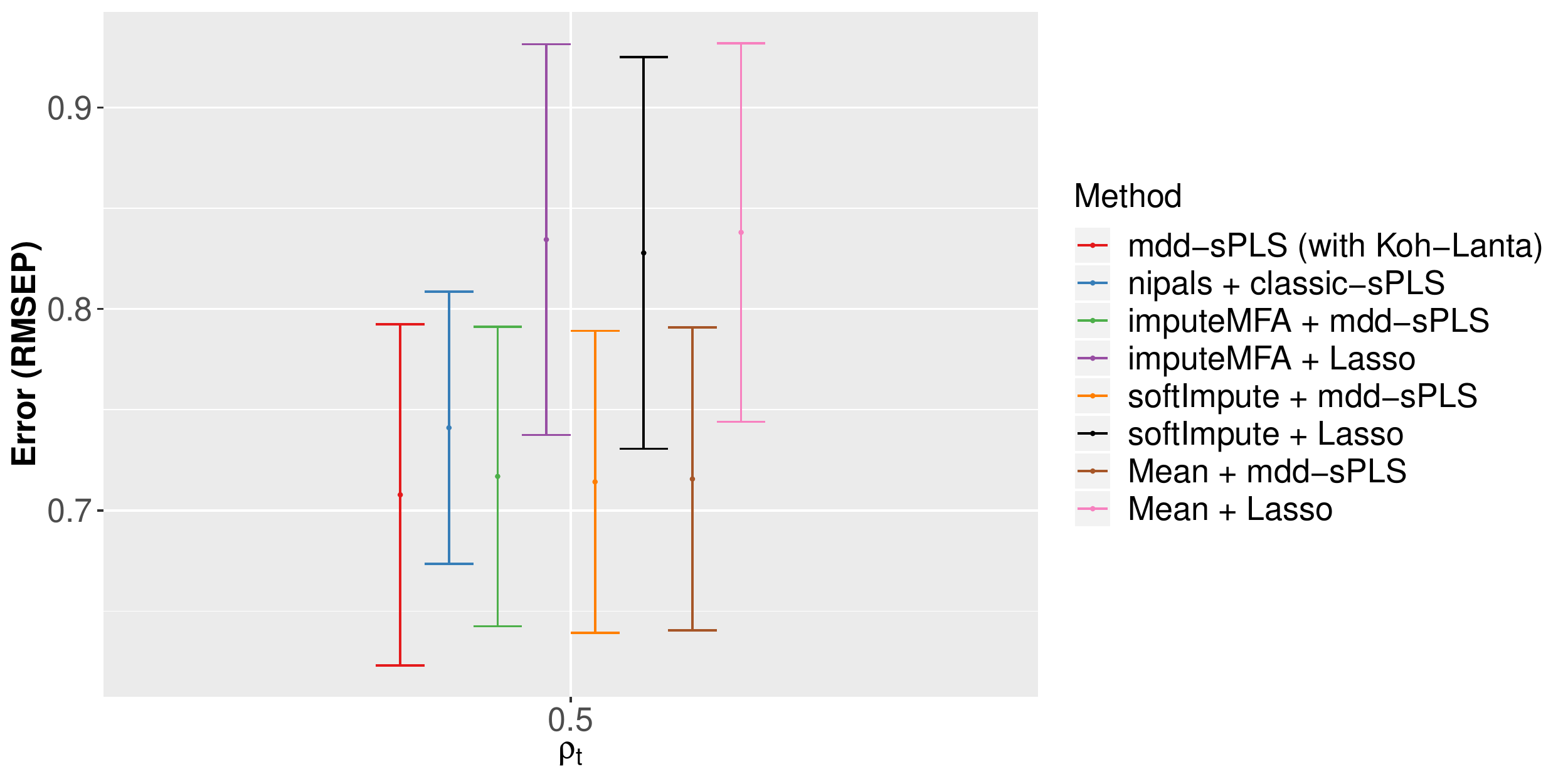}
	\caption{Effect of low intra-block correlation, $\rho_d=0.3$, and low inter-block correlation, $\rho_t=0.5$, on the overall errors for fixed values of of missing values, $30\%$, for $100$ individuals per simulated data set and a total of $100$ simulated data sets.
	\label{fig:error_low_info_extreme}}
\end{figure}

For that level of information, the all methods show equivalent performances, with an error close to $\approx 0.73$, except for the \textbf{Lasso} based prediction methods, for which the error is higher $\approx 0.83$.

\clearpage


\begin{thebibliography}{42}
\providecommand{\natexlab}[1]{#1}
\providecommand{\url}[1]{\texttt{#1}}
\expandafter\ifx\csname urlstyle\endcsname\relax
  \providecommand{\doi}[1]{doi: #1}\else
  \providecommand{\doi}{doi: \begingroup \urlstyle{rm}\Url}\fi

\bibitem[Amini and Wainwright(2008)]{amini2008high}
Arash~A Amini and Martin~J Wainwright.
\newblock High-dimensional analysis of semidefinite relaxations for sparse
  principal components.
\newblock In \emph{Information Theory, 2008. ISIT 2008. IEEE International
  Symposium on}, pages 2454--2458. IEEE, 2008.

\bibitem[Bertsimas et~al.(2018)Bertsimas, Pawlowski, and Zhuo]{JMLR:v18:17-073}
Dimitris Bertsimas, Colin Pawlowski, and Ying~Daisy Zhuo.
\newblock From predictive methods to missing data imputation: An optimization
  approach.
\newblock \emph{Journal of Machine Learning Research}, 18\penalty0
  (196):\penalty0 1--39, 2018.
\newblock URL \url{http://jmlr.org/papers/v18/17-073.html}.

\bibitem[Bougeard et~al.(2011)Bougeard, Qannari, Lupo, and
  Hanafi]{bougeard2011multiblock}
St{\'e}phanie Bougeard, El~Mostafa Qannari, Coralie Lupo, and Mohamed Hanafi.
\newblock From multiblock partial least squares to multiblock redundancy
  analysis. a continuum approach.
\newblock \emph{Informatica}, 22\penalty0 (1):\penalty0 11--26, 2011.

\bibitem[Buuren and Groothuis-Oudshoorn(2010)]{buuren2010mice}
S~van Buuren and Karin Groothuis-Oudshoorn.
\newblock mice: Multivariate imputation by chained equations in r.
\newblock \emph{Journal of statistical software}, pages 1--68, 2010.

\bibitem[Cai and Liu(2011)]{cai2011adaptive}
Tony Cai and Weidong Liu.
\newblock Adaptive thresholding for sparse covariance matrix estimation.
\newblock \emph{Journal of the American Statistical Association}, 106\penalty0
  (494):\penalty0 672--684, 2011.

\bibitem[Chun and Kele{\c{s}}(2010)]{chun2010sparse}
Hyonho Chun and S{\"u}nd{\"u}z Kele{\c{s}}.
\newblock Sparse partial least squares regression for simultaneous dimension
  reduction and variable selection.
\newblock \emph{Journal of the Royal Statistical Society: Series B (Statistical
  Methodology)}, 72\penalty0 (1):\penalty0 3--25, 2010.

\bibitem[Clemmensen et~al.(2011)Clemmensen, Hastie, Witten, and
  Ersb{\o}ll]{clemmensen2011sparse}
Line Clemmensen, Trevor Hastie, Daniela Witten, and Bjarne Ersb{\o}ll.
\newblock Sparse discriminant analysis.
\newblock \emph{Technometrics}, 53\penalty0 (4):\penalty0 406--413, 2011.

\bibitem[d'Aspremont et~al.(2005)d'Aspremont, Ghaoui, Jordan, and
  Lanckriet]{d2005direct}
Alexandre d'Aspremont, Laurent~E Ghaoui, Michael~I Jordan, and Gert~R
  Lanckriet.
\newblock A direct formulation for sparse pca using semidefinite programming.
\newblock In \emph{Advances in neural information processing systems}, pages
  41--48, 2005.

\bibitem[Deshpande and Montanari(2016)]{JMLR:v17:15-160}
Yash Deshpande and Andrea Montanari.
\newblock Sparse pca via covariance thresholding.
\newblock \emph{Journal of Machine Learning Research}, 17\penalty0
  (141):\penalty0 1--41, 2016.
\newblock URL \url{http://jmlr.org/papers/v17/15-160.html}.

\bibitem[Fan and Li(2001)]{fan2001variable}
Jianqing Fan and Runze Li.
\newblock Variable selection via nonconcave penalized likelihood and its oracle
  properties.
\newblock \emph{Journal of the American statistical Association}, 96\penalty0
  (456):\penalty0 1348--1360, 2001.

\bibitem[Hastie et~al.(2015)Hastie, Mazumder, Lee, and Zadeh]{hastie2015matrix}
Trevor Hastie, Rahul Mazumder, Jason~D Lee, and Reza Zadeh.
\newblock Matrix completion and low-rank svd via fast alternating least
  squares.
\newblock \emph{Journal of Machine Learning Research}, 16\penalty0
  (1):\penalty0 3367--3402, 2015.
\newblock URL \url{http://jmlr.org/papers/volume16/hastie15a/hastie15a.pdf}.

\bibitem[Heinloth et~al.(2004)Heinloth, Irwin, Boorman, Nettesheim, Fannin,
  Sieber, Snell, Tucker, Li, Travlos, et~al.]{heinloth2004gene}
Alexandra~N Heinloth, Richard~D Irwin, Gary~A Boorman, Paul Nettesheim,
  Rickie~D Fannin, Stella~O Sieber, Michael~L Snell, Charles~J Tucker, Leping
  Li, Gregory~S Travlos, et~al.
\newblock Gene expression profiling of rat livers reveals indicators of
  potential adverse effects.
\newblock \emph{Toxicological Sciences}, 80\penalty0 (1):\penalty0 193--202,
  2004.

\bibitem[H{\"o}skuldsson(1988)]{hoskuldsson1988pls}
Agnar H{\"o}skuldsson.
\newblock Pls regression methods.
\newblock \emph{Journal of chemometrics}, 2\penalty0 (3):\penalty0 211--228,
  1988.

\bibitem[Hosmer and Lemeshow(1989)]{hosmer1989applied}
DW~Hosmer and Stanley Lemeshow.
\newblock Applied logistic regression. 1989.
\newblock \emph{New York: Johns Wiley \& Sons}, 1989.

\bibitem[Husson and Josse(2013)]{husson2013handling}
Fran{\c{c}}ois Husson and Julie Josse.
\newblock Handling missing values in multiple factor analysis.
\newblock \emph{Food quality and preference}, 30\penalty0 (2):\penalty0 77--85,
  2013.

\bibitem[Johnstone and Lu(2004)]{johnstone2004sparse}
Iain~M Johnstone and Arthur~Yu Lu.
\newblock Sparse principal components analysis.
\newblock \emph{Unpublished manuscript}, 7, 2004.

\bibitem[Johnstone and Lu(2009)]{johnstone2009consistency}
Iain~M Johnstone and Arthur~Yu Lu.
\newblock On consistency and sparsity for principal components analysis in high
  dimensions.
\newblock \emph{Journal of the American Statistical Association}, 104\penalty0
  (486):\penalty0 682--693, 2009.

\bibitem[Jolliffe et~al.(2003)Jolliffe, Trendafilov, and
  Uddin]{jolliffe2003modified}
Ian~T Jolliffe, Nickolay~T Trendafilov, and Mudassir Uddin.
\newblock A modified principal component technique based on the lasso.
\newblock \emph{Journal of computational and Graphical Statistics}, 12\penalty0
  (3):\penalty0 531--547, 2003.

\bibitem[Josse and Husson(2016)]{josse2016missmda}
Julie Josse and Fran{\c{c}}ois Husson.
\newblock missmda: a package for handling missing values in multivariate data
  analysis.
\newblock \emph{Journal of Statistical Software}, 70\penalty0 (1):\penalty0
  1--31, 2016.

\bibitem[Krauthgamer et~al.(2015)Krauthgamer, Nadler, Vilenchik,
  et~al.]{krauthgamer2015semidefinite}
Robert Krauthgamer, Boaz Nadler, Dan Vilenchik, et~al.
\newblock Do semidefinite relaxations solve sparse pca up to the information
  limit?
\newblock \emph{The Annals of Statistics}, 43\penalty0 (3):\penalty0
  1300--1322, 2015.

\bibitem[L{\^e}~Cao et~al.(2008)L{\^e}~Cao, Rossouw, Robert-Grani{\'e}, and
  Besse]{le2008sparse}
Kim-Anh L{\^e}~Cao, Debra Rossouw, Christele Robert-Grani{\'e}, and Philippe
  Besse.
\newblock A sparse pls for variable selection when integrating omics data.
\newblock \emph{Statistical applications in genetics and molecular biology},
  7\penalty0 (1), 2008.

\bibitem[L{\^e}~Cao et~al.(2009)L{\^e}~Cao, Gonz{\'a}lez, and
  D{\'e}jean]{le2009integromics}
Kim-Anh L{\^e}~Cao, Ignacio Gonz{\'a}lez, and S{\'e}bastien D{\'e}jean.
\newblock integromics: an r package to unravel relationships between two omics
  data sets.
\newblock \emph{Bioinformatics}, 25\penalty0 (21):\penalty0 2855--2856, 2009.

\bibitem[Manne(1987)]{manne1987analysis}
Rolf Manne.
\newblock Analysis of two partial-least-squares algorithms for multivariate
  calibration.
\newblock \emph{Chemometrics and Intelligent Laboratory Systems}, 2\penalty0
  (1-3):\penalty0 187--197, 1987.

\bibitem[Nelson et~al.(1996)Nelson, Taylor, and MacGregor]{nelson1996missing}
Philip~RC Nelson, Paul~A Taylor, and John~F MacGregor.
\newblock Missing data methods in pca and pls: Score calculations with
  incomplete observations.
\newblock \emph{Chemometrics and intelligent laboratory systems}, 35\penalty0
  (1):\penalty0 45--65, 1996.

\bibitem[Penrose(1956)]{penrose1956best}
Roger Penrose.
\newblock On best approximate solutions of linear matrix equations.
\newblock In \emph{Mathematical Proceedings of the Cambridge Philosophical
  Society}, volume~52, pages 17--19. Cambridge University Press, 1956.

\bibitem[Qin et~al.(2001)Qin, Valle, and Piovoso]{qin2001unifying}
S~Joe Qin, Sergio Valle, and Michael~J Piovoso.
\newblock On unifying multiblock analysis with application to decentralized
  process monitoring.
\newblock \emph{Journal of chemometrics}, 15\penalty0 (9):\penalty0 715--742,
  2001.

\bibitem[Rechtien et~al.(2017)Rechtien, Richert, Lorenzo, Martrus, Hejblum,
  Dahlke, Kasonta, Zinser, Stubbe, Matschl, Lohse, Kr{\"a}hling, Eickmann,
  Becker, Agnandji, Krishna, Kremsner, Brosnahan, Bejon, Njuguna, Addo,
  Siegrist, Huttner, Kieny, Moorthy, Fast, Savarese, Lapujade, Thi{\'e}baut,
  Altfeld, and Addo]{rechtienrichertlorenzo}
Anne Rechtien, Laura Richert, Hadrien Lorenzo, Gloria Martrus, Boris Hejblum,
  Christine Dahlke, Rahel Kasonta, Madeleine Zinser, Hans Stubbe, Urte Matschl,
  Ansgar Lohse, Verena Kr{\"a}hling, Markus Eickmann, Stephan Becker,
  Selidji~Todagbe Agnandji, Sanjeev Krishna, Peter~G. Kremsner, Jessica~S.
  Brosnahan, Philip Bejon, Patricia Njuguna, Marylyn~M. Addo, Claire-Anne
  Siegrist, Angela Huttner, Marie-Paule Kieny, Vasee Moorthy, Patricia Fast,
  Barbara Savarese, Olivier Lapujade, Rodolphe Thi{\'e}baut, Marcus Altfeld,
  and Marylyn Addo.
\newblock Systems vaccinology identifies an early innate immune signature as a
  correlate of antibody responses to the ebola vaccine rvsv-zebov.
\newblock \emph{Cell Reports}, 20\penalty0 (9):\penalty0 2251--2261, 09 2017.
\newblock ISSN 2211-1247.
\newblock \doi{10.1016/j.celrep.2017.08.023}.
\newblock URL \url{http://dx.doi.org/10.1016/j.celrep.2017.08.023}.

\bibitem[Rothman et~al.(2009)Rothman, Levina, and Zhu]{rothman2009generalized}
Adam~J Rothman, Elizaveta Levina, and Ji~Zhu.
\newblock Generalized thresholding of large covariance matrices.
\newblock \emph{Journal of the American Statistical Association}, 104\penalty0
  (485):\penalty0 177--186, 2009.

\bibitem[Sabatier et~al.(2003)Sabatier, Vivien, and Amenta]{sabatier2003two}
Robert Sabatier, Myrtille Vivien, and Pietro Amenta.
\newblock Two approaches for discriminant partial least squares.
\newblock In \emph{Between data science and applied data analysis}, pages
  100--108. Springer, 2003.

\bibitem[Sj{\"o}str{\"o}m et~al.(1986)Sj{\"o}str{\"o}m, Wold, and
  S{\"o}derstr{\"o}m]{sjostrom1986pls}
Michael Sj{\"o}str{\"o}m, Svante Wold, and Bengt S{\"o}derstr{\"o}m.
\newblock Pls discriminant plots.
\newblock In \emph{Pattern Recognition in Practice, Volume II}, pages 461--470.
  Elsevier, 1986.

\bibitem[Stekhoven and B{\"u}hlmann(2011)]{stekhoven2011missforest}
Daniel~J Stekhoven and Peter B{\"u}hlmann.
\newblock Missforest—non-parametric missing value imputation for mixed-type
  data.
\newblock \emph{Bioinformatics}, 28\penalty0 (1):\penalty0 112--118, 2011.

\bibitem[Tenenhaus and Tenenhaus(2011)]{tenenhaus2011regularized}
Arthur Tenenhaus and Michel Tenenhaus.
\newblock Regularized generalized canonical correlation analysis.
\newblock \emph{Psychometrika}, 76\penalty0 (2):\penalty0 257, 2011.

\bibitem[Tenenhaus et~al.(2014)Tenenhaus, Philippe, Guillemot, Le~Cao, Grill,
  and Frouin]{tenenhaus2014variable}
Arthur Tenenhaus, Cathy Philippe, Vincent Guillemot, Kim-Anh Le~Cao, Jacques
  Grill, and Vincent Frouin.
\newblock Variable selection for generalized canonical correlation analysis.
\newblock \emph{Biostatistics}, 15\penalty0 (3):\penalty0 569--583, 2014.

\bibitem[Tibshirani(1996)]{tibshirani1996regression}
Robert Tibshirani.
\newblock Regression shrinkage and selection via the lasso.
\newblock \emph{Journal of the Royal Statistical Society. Series B
  (Methodological)}, pages 267--288, 1996.

\bibitem[Troyanskaya et~al.(2001)Troyanskaya, Cantor, Sherlock, Brown, Hastie,
  Tibshirani, Botstein, and Altman]{troyanskaya2001missing}
Olga Troyanskaya, Michael Cantor, Gavin Sherlock, Pat Brown, Trevor Hastie,
  Robert Tibshirani, David Botstein, and Russ~B Altman.
\newblock Missing value estimation methods for dna microarrays.
\newblock \emph{Bioinformatics}, 17\penalty0 (6):\penalty0 520--525, 2001.

\bibitem[Wangen and Kowalski(1989)]{wangen1989multiblock}
LE~Wangen and BR~Kowalski.
\newblock A multiblock partial least squares algorithm for investigating
  complex chemical systems.
\newblock \emph{Journal of chemometrics}, 3\penalty0 (1):\penalty0 3--20, 1989.

\bibitem[Westerhuis and Smilde(2001)]{westerhuis2001deflation}
Johan~A Westerhuis and Age~K Smilde.
\newblock Deflation in multiblock pls.
\newblock \emph{Journal of chemometrics}, 15\penalty0 (5):\penalty0 485--493,
  2001.

\bibitem[Westerhuis et~al.(1997)Westerhuis, Coenegracht, and
  Lerk]{westerhuis1997multivariate}
Johan~A Westerhuis, Pierre~MJ Coenegracht, and Coenraad~F Lerk.
\newblock Multivariate modelling of the tablet manufacturing process with wet
  granulation for tablet optimization and in-process control.
\newblock \emph{International journal of Pharmaceutics}, 156\penalty0
  (1):\penalty0 109--117, 1997.

\bibitem[Wold(1966)]{wold1966estimation}
Herman Wold.
\newblock Estimation of principal components and related models by iterative
  least squares.
\newblock \emph{Multivariate analysis}, pages 391--420, 1966.

\bibitem[Wold(1984)]{wold1984three}
S~Wold.
\newblock Three pls algorithms according to sw.
\newblock In \emph{Proc.: Symposium MULDAST (multivariate analysis in science
  and technology)}, pages 26--30, 1984.

\bibitem[Wold et~al.(1983)Wold, Martens, and Wold]{wold1983multivariate}
Svante Wold, Harold Martens, and H~Wold.
\newblock The multivariate calibration problem in chemistry solved by the pls
  method.
\newblock In \emph{Matrix pencils}, pages 286--293. Springer, 1983.

\bibitem[Zou et~al.(2006)Zou, Hastie, and Tibshirani]{zou2006sparse}
Hui Zou, Trevor Hastie, and Robert Tibshirani.
\newblock Sparse principal component analysis.
\newblock \emph{Journal of computational and graphical statistics}, 15\penalty0
  (2):\penalty0 265--286, 2006.

\end{thebibliography}

\end{document}